\title{Symmetry Breaking for Inductive Logic Programming}
\author{
    Andrew Cropper\textsuperscript{\rm 1,2}\equalcontrib, 
    David M. Cerna\textsuperscript{\rm 3,4}\equalcontrib, 
    Matti J\"{a}rvisalo\textsuperscript{\rm 2}\\
}
\title{My Publication Title --- Single Author}
\author {
    Author Name
}
\title{My Publication Title --- Multiple Authors}
\author {
    % Authors
    First Author Name\textsuperscript{\rm 1},
    Second Author Name\textsuperscript{\rm 2},
    Third Author Name\textsuperscript{\rm 1}
}
\preto{\@verbatim}{\small} % Change \small to \footnotesize, \scriptsize, etc., as desired
\pgfplotsset{compat = 1.15, cycle list/Set1-8} 
\newcommand{\tw}[1]{\texttt{#1}}
\newcommand{\popper}{\textsc{Popper}}
\newcommand{\ordV}{\ensuremath{<_{\mathcal{V}}}}
\theoremstyle{definition}
\newtheorem{definition}{Definition}
\newtheorem{example}{Example}
\newtheorem{theorem}{Theorem}
\newtheorem{proposition}{Proposition}
\newtheorem{lemma}{Lemma}
\newtheorem{assumption}{Assumption}
\begin{document}

\maketitle

\begin{abstract}
The goal of inductive logic programming is to search for a hypothesis that generalises training data and background knowledge.
The challenge is searching vast hypothesis spaces, which is exacerbated because many logically equivalent hypotheses exist.
To address this challenge, we introduce a method to break symmetries in the hypothesis space.
We implement our idea in answer set programming.
Our experiments on multiple domains, including visual reasoning and game playing, show that our approach can reduce solving times from over an hour to 17 seconds.
\end{abstract}
\begin{links}
\link{Code}{https://github.com/logicand-learning-lab/aaai26-symbreak}
% \link{Extended version}{https://arxiv.org/pdf/2508.06263}
\end{links}
\section{Introduction}
Inductive logic programming (ILP) is a form of machine learning \cite{mugg:ilp,ilpintro}. 
The goal is to search a hypothesis space for a hypothesis (a set of rules) that generalises given training examples and background knowledge (BK).
% ILP uses first-order logic to represent hypotheses, examples, and BK.

To illustrate ILP, consider the inductive reasoning game Zendo.
In this game, one player, the teacher, creates a secret rule that describes structures. 
The other players, the students, try to discover the secret rule by building structures. The teacher marks whether structures follow or
break the rule. 
The first student to correctly guess the rule wins.
For instance, for the positive and negative examples shown in Figure~\ref{fig:motivatingExample1}, a possible rule is \emph{``there is a small blue piece''}.
\begin{figure}[ht]
\centering
\includegraphics[width=.16\textwidth]{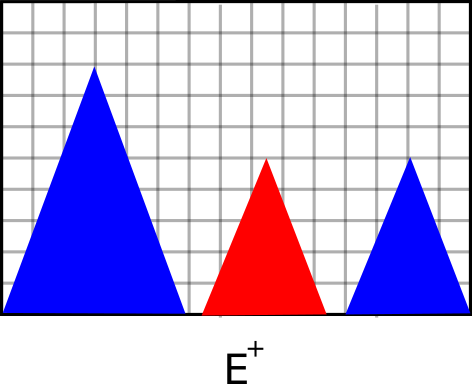}
\hspace{.4em}
\includegraphics[width=.16\textwidth]{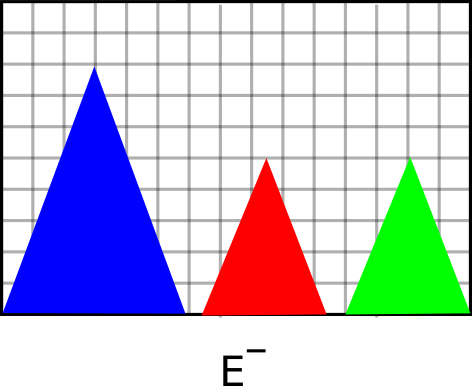}
    \caption{Positive (E$^+$ ) and negative (E$^-$) Zendo examples.}
    \label{fig:motivatingExample1}
\end{figure}

%\ac{@DC, can you add a zendo example similar to the NOPI paper?}
\noindent
Given these examples and BK about the structures, an ILP system can induce a rule such as:

\begin{center}
\begin{tabular}{l}
\emph{zendo(A) $\leftarrow$ piece(A,B), size(B,C), blue(B), small(C)}
\end{tabular}
\end{center} 

\noindent
The fundamental challenge in ILP is searching large hypothesis spaces.
Recent approaches tackle this challenge by delegating the search to an off-the-shelf solver, such as a Boolean satisfiability solver~\cite{atom,maxsynth} or an answer set programming (ASP) solver~\cite{ilasp,hexmil,popper,apperception}.

One problem for recent approaches is symmetries in the hypothesis space where many logically equivalent hypotheses exist.
For instance, consider these two Zendo rules:
\begin{center}
\begin{tabular}{l}
\emph{$r_1$ = zendo(A) $\leftarrow$ piece(A,B), size(B,C), blue(B), small(C)}\\
\emph{$r_2$ = zendo(A) $\leftarrow$ piece(A,C), size(C,B), blue(C), small(B)}
\end{tabular}
\end{center} 
\noindent
Both rules can be in the hypothesis space.
However, the rules are syntactic variants when considering theta-substitution \cite{plotkin:thesis,lloyd:book}.
For instance, $r_1 = r_2\theta$ where $\theta= \{C\mapsto B, B\mapsto C\}$.
However, determining whether two rules are symmetrical (syntactic variants) is hard. 
For instance, theta-substitution is in general NP-complete \cite{garey1979computers}.

In this paper, we address this symmetry challenge.
We first define the \emph{body-variant} problem: determining whether two rules only differ by the naming of body-only variables, such as $r_1$ and $r_2$ above.
We show that this problem is graph-isomorphism hard \cite{GIHardnessKST93}.

Due to the hardness of the body-variant problem, we introduce a sound, albeit incomplete, symmetry-breaking approach.
Our idea exploits an ordering of variables in a rule. 
If a body literal $l$ does not contain a variable $x$, but contains variables both larger and smaller than $x$ according to the variable ordering, then $x$ must occur in a literal lexicographically smaller than $l$. 
In other words, any gaps in the occurrence of variables with respect to an ordering must be justified by lexicographically smaller literals. We only order literals with arity greater than one because we assume that rules do not contain singleton variables, so the argument of a unary literal must appear elsewhere in the rule.

To illustrate our approach, consider rules $r_1$ and $r_2$ above.
Assuming that variables are ordered alphabetically, the literal \emph{piece(A,C)} of $r_2$ has a gap in its arguments because $B$ is missing. Furthermore,  \emph{size(C,B)} contains $B$ but $(A,C) <_{lex}(B,C)$. By renaming $C$ to $B$ and $B$ to $C$ within $r_2$ the rule $r_1$ results.  
Observe that $r_1$ contains both $piece(A,B)$ and $size(B,C)$ and $(A,B) <_{lex}(B,C)$. 
Thus, we can remove $r_2$ from the hypothesis space and keep $r_1$.
%Therefore, we can remove this gap by renaming $C$ to $B$ and % $C$ to $B$ resulting in the mapping $\theta$. 
% If we apply $\theta$ to $r_2$ the resulting rule is $r_1$ 
% which contains $piece(A,B)$ and $size(B,C)$ and $(A,B) <_{lex}(B,C)$ \ac{@DC, see discord}. 
% Thus, we can remove $r_2$ from the hypothesis space and keep $r_1$.

We implement our symmetry-breaking idea in ASP and demonstrate it with the ASP-based ILP system \popper{}, which can learn optimal and recursive hypotheses from noisy data \cite{popper,maxsynth}.
We show that adding symmetry-breaking constraints to \popper{} can drastically reduce solving and learning time, sometimes from over an hour to only 17 seconds.

\paragraph{Contributions and Novelty.}
The main novelty of this paper is a method for breaking symmetries in ILP.
% \ac{@AC, highlight how we combine two disparate fields}
As far as we are aware, there is no existing work on this topic.
% We expand on the novelty in Section \ref{sec:related}.
The impact, which we demonstrate on multiple domains, is vastly reduced solving and learning times.
Moreover, this work connects multiple AI areas, notably machine learning and constraint programming.
% , there is much potential for broad research to advance this idea.
Overall, we contribute the following:

\begin{itemize}
    \item We define the problem of determining whether two rules are body-variants (Definition \ref{def:body_variant_problem}).
    We show that this problem is graph isomorphism hard (Proposition \ref{prop:bodyVarHard}).
    \item We describe a symmetry-breaking approach to prune body-variant rules.
    We prove that this approach is sound and only prunes body-variant rules (Proposition \ref{lem:sound}).
    \item We describe an ASP implementation of our encoding that works with the ILP system \popper{}.
    \item We show experimentally on multiple domains, including visual reasoning and game playing, that our approach can reduce solving and learning times by 99\%.
\end{itemize}

\section{Related Work}
\label{sec:related}

\textbf{Rule learning.}
ILP induces rules from data, similar to rule learning methods \cite{DBLP:conf/ruleml/FurnkranzK15}, such as AMIE+ \cite{DBLP:journals/vldb/GalarragaTHS15} and RDFRules \cite{rdfrules}.
Rule-mining methods are typically limited to unary and binary relations and require facts as input and operate under an open-world assumption. 
By contrast, ILP usually operates under a closed-world assumption, supports relations of any arity, and can learn from definite programs as background knowledge.

\textbf{ILP.}
Most classical ILP approaches build a hypothesis using a set covering algorithm by adding rules one at a time \cite{foil,progol,tilde,aleph}.
These approaches build rules literal-by-literal using refinement operators to generalise or specialise a rule \cite{mis,foil,progol,claudien,tilde,aleph,DBLP:journals/ml/Tamaddoni-NezhadM09}.
% There is much research on refinement operators \cite{DBLP:journals/ml/Tamaddoni-NezhadM09}, such as identifying whether operators are \emph{complete} (generate all possible refinements),  \emph{proper} (each refinement is strictly more general or more specific),  and \emph{ideal} (complete and proper) \cite{claudien,ilp:book,luc:book}.
% There is much research on refinement operators \cite{}.
We differ because we do not use refinement operators.
 % build a hypothesis using a set covering algorithm and do not
% to build rules one-literal-at-a-time.
Rather, we break symmetries in an ILP approach that frames the ILP problem as a constraint satisfaction problem.

\textbf{Constraint-based ILP.}
Many ILP systems frame the ILP problem as a constraint satisfaction problem, such as SAT~\cite{atom,maxsynth} and ASP~\cite{aspal,ilasp,inspire,hexmil,apperception,aspsynth}.
Most of these approaches formulate the ILP problem as rule selection where they precompute every possible rule in the hypothesis space and use a solver to search for a subset that generalises the examples.
Because they precompute all possible rules, they cannot learn rules with many literals.
By contrast, we build constraints to restrict rule generation without precomputation.
% Moreover, because these approaches do not use refinement operators, there is little research on redundancy. 
% Our paper contributes to this issue.

\textbf{Redundancy in ILP.}
Symmetry breaking is a form of redundancy elimination.
There is much work on avoiding redundancy in ILP, such as when testing hypotheses on training data~\cite{queryoptimisation}.
\citet{RaedtR04} check whether a rule has a redundant atom before testing it on examples to avoid a coverage check.
\citet{quickfoil} prune rules with syntactic redundancy.
For the rule \emph{h(X) $\leftarrow$ p(X,Y), p(X,Z)}, they detect that \emph{p(X,Z)} duplicates \emph{p(X,Y)} under the renaming $Z \mapsto Y$, where $Z$ and $Y$ are not in other literals.
By contrast, we reduce redundancy through declarative symmetry-breaking constraints.

\textbf{Symmetry Breaking.}
Highly symmetric problems, when represented declaratively, can lead solvers such as ASP and SAT to perform redundant search.
Lex-leader~\cite{DBLP:conf/kr/CrawfordGLR96} uses tools for computing graph symmetries~\cite{nauty,DBLP:conf/alenex/JunttilaK07,DBLP:conf/dac/DargaLSM04} to automatically identify instance-specific symmetries.
Detected symmetries are eliminated by introducing lex-leader symmetry-breaking constraints during preprocessing.
This approach has been employed in SAT~\cite{DBLP:conf/dac/AloulMS03,DBLP:conf/sat/AndersBR24}, ASP~\cite{DBLP:journals/aicom/DrescherTW11,DBLP:journals/corr/Devriendt016}, and other declarative approaches.
\citet{alice_symmetry} use ILP to learn first-order ASP symmetry-breaking constraints from examples of instance-specific constraints. 
By contrast, we break symmetries in the ILP hypothesis space.
Our notion of symmetry differs from conventional domain-level symmetries~\cite{DBLP:journals/tplp/DevriendtBBD16}.
In our work, symmetry refers to syntactic variable interchangeability in rule bodies: two rules are symmetric if one can be obtained from the other by renaming variables.
\section{Inductive Logic Programming}
\label{sec:ProblemSetting}
In this section, we define ILP and describe prerequisite notation.
Let $\mathcal{P}$ be a countably infinite set of predicate symbols and $\mathcal{V}$ be a countably infinite set of variables totally well-ordered by $<_{\mathcal{V}}$. A \emph{term} is either a constant or a variable. Every $p\in \mathcal{P}$ has an arity $0\leq a$ denoted $arity(p)$.
An \emph{atom} is of the form $p(t_1,\dots,t_a)$ where $p\in\mathcal{P}$, $t_1,\dots,t_a$ are terms. Given an atom $l= p(t_1,\dots,t_a)$, $sym(l) = p$, $arity(l)=a$, $args(l)=(t_1,\dots,t_a)$ where $(\dots)$ is an  ordered tuple.  We refer to an atom as \emph{ground} if all its arguments are constants and \emph{constant-free} if all its arguments are variables.  A literal is an atom or a negated atom. 
A rule $r$ is of the form $h\leftarrow p_1,\ldots, p_n$ where $h, p_1,\ldots, 
p_n$ are literals, $head(r) = h$, $body(r) = \{ p_1,\ldots, p_n\}$, and $body_{\geq 2}(r)$ is the subset of $body(r)$ containing all literals in $r$ with arity $\geq 2$. We will only consider rules that are \emph{head-connected}, i.e. all variables occurring in the head of a rule appear in a literal of the body of the rule. We refer to a rule as \emph{constant-free} if all of its literals are constant-free. 
By $vars(l)$ we denote the set of variables in a literal $l$. The variables of a rule $r$, denoted $vars(r)$, is defined as $head\_vars(r) \cup body\_vars(r)$ where $head\_vars(r)= vars(head(r))$ and $body\_vars(r)= vars(body(r))$.

%\subsection{Inductive Logic Programming}
We formulate our approach in the ILP learning from entailment setting \cite{luc:book}.
We define an ILP input:

\begin{definition}[\textbf{ILP input}]
\label{def:probin}
An ILP input is a tuple $(E, B, \mathcal{H})$ where $E=(E^+,E^-)$ is a pair of sets of ground atoms denoting positive ($E^+$) and negative ($E^-$) examples, $B$ is background knowledge, and $\mathcal{H}$ is a hypothesis space, i.e a set of possible hypotheses.
\end{definition}
\noindent
We restrict hypotheses and background knowledge to definite programs with the least Herbrand model semantics \cite{lloyd:book}.
We define a cost function:
\begin{definition}[\textbf{Cost function}]
\label{def:cost_function}
Given an ILP input $(E, B, \mathcal{H})$, a cost function $cost_{E,B} : \mathcal{H} \to \mathbb{N}$ assigns a numerical cost to each hypothesis in $\mathcal{H}$.
\end{definition}

\noindent
We define an \emph{optimal} hypothesis:
\begin{definition}[\textbf{Optimal hypothesis}]
\label{def:opthyp}
Given an ILP input $(E, B, \mathcal{H})$ and a cost function \emph{cost$_{E,B}$}, a hypothesis $h \in \mathcal{H}$ is \emph{optimal} with respect to \emph{cost$_{E,B}$} when $\forall h' \in \mathcal{H}$, \emph{cost$_{E,B}$}($h$) $\leq$ \emph{cost$_{E,B}$}($h'$).
\end{definition}

\section{Complexity of Symmetry Breaking in ILP}
\label{sec:complexity}

Our goal is to find a subset of a hypothesis space that contains at least one optimal hypothesis after symmetry breaking:
% , thereby maintaining soundness.
\begin{definition}[\textbf{Hypothesis space reduction problem}]
\label{def:hypredprob2}
Given an 
ILP input $(E, B, \mathcal{H})$, the \emph{hypothesis space reduction problem} is to find $\mathcal{H}' \subseteq \mathcal{H}$ such that if $\mathcal{H}$ contains an optimal hypothesis then there exists an optimal hypothesis $h \in \mathcal{H}'$.
% s.t. \emph{cost$_{E,B}$}($h$) = \emph{cost$_{E,B}$}($h'$).
\end{definition}
% \ac{@AC, mention symmetry breaking}
%\subsection{Hypothesis Variants}

\noindent
We focus on breaking symmetries by removing hypotheses that differ only by renaming variables.
We define a \emph{body-variant} rule:
\begin{definition}[\textbf{Body variant}]
A rule $r'$ is a \emph{body-variant} of a rule $r$ if there exists a bijective renaming $
\sigma$ from  $body\_vars(r)$ to $body\_vars(r')$ such that $r\sigma = r'$.
    % \ac{Copy from LP book?}
\end{definition}

\begin{example}
Consider the rules in the introduction:
% \begin{center}
\begin{tabular}{l}
\emph{$r_1$ = zendo(A) $\leftarrow$ piece(A,B), size(B,C), blue(B), small(C)}\\
\emph{$r_2$ = zendo(A) $\leftarrow$ piece(A,C), size(C,B), blue(C), small(B)}
\end{tabular}
% \end{center} 
Observe that  $head(r_1)=head(r_2)$ and using $\sigma_1=\{C\mapsto$  $ B, B\mapsto C\}$ and $\sigma_2= \{B\mapsto C,C\mapsto B\}$, it follows that $r_1\sigma_1 = r_2$ and $r_2\sigma_2 = r_1$.
\end{example}

\noindent
The \emph{body-variant} problem is to decide whether two rules differ only by the naming of body-only variables:
\begin{definition}[\textbf{Body-variant problem}]
\label{def:body_variant_problem}
 Given rules $r_1$ and $r_2$ such that $head(r_1)=head(r_2)$, the \emph{body-variant problem} is deciding whether $r_1$ and $r_2$ are body-variants of each other.

% there exists  bijective renamings $\sigma_1$ from $body\_vars(r_1)$ to $body\_vars(r_2)$ and $\sigma_2$ from $body\_vars(r_2)$ to $body\_vars(r_1)$ such that $r_1\sigma_1 = r_2$ and $r_2\sigma_2 = r_1$. 
%\ac{@DC, rephrase to simply say that r1 and r2 are body variants?}
\end{definition}
\noindent
Deciding whether two rules are body variants is intractable as there is a reduction from the \emph{graph isomorphism} problem to the body-variant problem, for which existing decision procedures are computationally prohibitive~\cite{graphIsoComplex,graphIsoComplex2}. 
The following result illustrates the (GI)-hardness~\cite{GIHardnessKST93} of the body-variant problem:

\begin{proposition}[\textbf{Body-variant hardness}]
\label{prop:bodyVarHard}
The body-variant problem is GI-hard\footnote{\citet{ArvindDKT15} reduce graph isomorphism to hypergraph isomorphism. Using this reduction we can extend our reduction from binary to n-ary predicates.}.
\end{proposition}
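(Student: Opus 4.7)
The plan is to construct a polynomial-time many-one reduction from graph isomorphism to the body-variant problem. Given two undirected graphs $G_1 = (V_1, E_1)$ and $G_2 = (V_2, E_2)$, I would first handle the trivial case where $|V_1| \neq |V_2|$ by returning any two rules with differing body-variable counts (which cannot be body-variants). After removing isolated vertices (which can be accounted for separately without loss of generality, since GI on graphs with isolated vertices reduces to GI on graphs without them, together with counting), I encode each graph as a rule.

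More precisely, choose a $0$-ary predicate $h$ and a binary predicate $edge$. For each graph $G_i$, associate a fresh variable $x_v^i$ with every vertex $v \in V_i$, and define
\begin{equation*}
r_i \;=\; h \leftarrow \bigwedge_{\{u,v\} \in E_i} \bigl( edge(x_u^i, x_v^i),\; edge(x_v^i, x_u^i) \bigr).
\end{equation*}
Including both orientations of each edge makes the encoding faithful to the undirected semantics. By construction $head(r_1) = head(r_2) = h$, both rules are constant-free, head-connectedness is vacuous (empty head variables), and with no isolated vertices every body variable appears in at least one body literal. The construction is clearly polynomial in $|V_i| + |E_i|$.

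The key claim is that $r_1$ and $r_2$ are body-variants iff $G_1 \cong G_2$. For the ``if'' direction, any graph isomorphism $\pi: V_1 \to V_2$ induces a bijective renaming $\sigma(x_v^1) = x_{\pi(v)}^2$; since $\pi$ maps edges to edges, applying $\sigma$ to each body literal of $r_1$ yields exactly the body of $r_2$, so $r_1 \sigma = r_2$. For the ``only if'' direction, a bijective renaming $\sigma$ from $body\_vars(r_1)$ to $body\_vars(r_2)$ with $r_1 \sigma = r_2$ defines $\pi(v) = u$ whenever $\sigma(x_v^1) = x_u^2$; this $\pi$ is a bijection $V_1 \to V_2$, and because $edge(x_u^1, x_v^1) \in body(r_1)$ forces $edge(x_{\pi(u)}^2, x_{\pi(v)}^2) \in body(r_2)$, edges are preserved.

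The main obstacle I anticipate is not any of the routine verification above but rather isolating what goes wrong in the corner cases: isolated vertices (handled by preprocessing), the set-versus-multiset treatment of the body (handled since no two distinct edges produce the same pair of literals), and the requirement $head(r_1) = head(r_2)$ demanded by Definition~\ref{def:body_variant_problem} (handled trivially by the shared $0$-ary head). With these in place, the reduction goes through and establishes GI-hardness of the body-variant problem.
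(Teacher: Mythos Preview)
Your proposal is correct and follows essentially the same reduction as the paper: encode each graph as a rule with a $0$-ary head and binary \emph{edge} body literals indexed by vertex-variables, so that body-variance coincides with graph isomorphism. If anything, you are more careful than the paper's appendix proof---you include both orientations of each undirected edge, explicitly handle isolated vertices, and spell out both directions of the correspondence, whereas the paper leaves these implicit.
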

\begin{proof}[Proof (sketch)]
We encode a graph $G=(N,E)$  as a rule using a binary relation \textit{edge/2} and $|N|$ variables denoting nodes of $G$. The reduction follows from this encoding (See Appendix).
\end{proof}
\noindent
This result motivates us to develop an incomplete yet tractable approach to the body-variant problem.  
Before discussing our approach, we first generalise the body-variant problem to hypothesis variants:

\begin{definition}[\textbf{Hypothesis variant}]
A hypothesis $h'$ is a \emph{variant} of a hypothesis $h$ if there is a bijective mapping $f$ from the rules of $h'$ to the rules of $h$ such that for all $r\in h'$, $r$ is a body variant $f(r)$. 
%Given a hypothesis $h$, the set of variants of $h$ is denoted by \emph{variants(h)}.
% \ac{I think it needs to be stricter. H could have more runes than h'. I think it needs to be a one-to-one mapping}
\end{definition}
% \begin{example}
%     The following hypotheses are variants:
% \[
% \begin{array}{l}
%     h_1=\left\{
%     \begin{array}{l}        \emph{$r_1$ = z(A)$\leftarrow$piece(A,B),size(B,C),blue(B),small(C)}\\
% \emph{$r_2$ = z(A)$\leftarrow$piece(A,C),size(C,B),red(C),large(B)}
%     \end{array}
%     \right\}
%     \end{array}
% \]

\begin{example}
Consider the following hypotheses:
    \begin{center}
        \underline{Hypothesis 1 ($h_1$)}

\begin{tabular}{l}
\emph{$r_1$ = zendo(A) $\leftarrow$ piece(A,B), size(B,C), blue(B), small(C)}\\
\emph{$r_2$ = zendo(A) $\leftarrow$ piece(A,C), size(C,B), red(C), large(B)}
\end{tabular}
\end{center} 

    \begin{center}
    \underline{Hypothesis 2 ($h_2$)}
\begin{tabular}{l}
\emph{$r_3$ = zendo(A) $\leftarrow$ piece(A,C), size(C,B), blue(C), small(B)}\\
\emph{$r_4$ = zendo(A) $\leftarrow$ piece(A,B), size(B,C), red(B), large(C)}
\end{tabular}
\end{center} 
Using a mapping $f$ such that $f(r_1)=r_3$ and $f(r_2)=r_4$, we can observe that $h_1$ is a hypothesis-variant of $h_2$.
\end{example}
\noindent
We generalise the body-variant problem to hypotheses:

\begin{definition}[\textbf{Hypothesis-variant problem}]
\label{def:variant_problem}
 Given hypotheses $h_1$ and $h_2$, the \emph{hypothesis-variant problem} is deciding whether $h_1$ and $h_2$ are hypothesis-variants of each other.
 %bijective mapping $f$ from the rules of $h_1$ to the rules of $h_2$ such that for all $r\in h_1$, $r$ is a body variant $f(r)$.
  %\ac{@DC, rephrase to simply say that h1 and h2 are hypothesis variants?}
 % \ac{same as above}
\end{definition}
\noindent
The body-variant problem is a special case of the hypothesis-variant problem, where hypotheses contain a single rule.
Thus, it follows that the hypothesis-variant problem is also GI-hard. 

% \ac{@AC, SUMMARISE THE HYPOTHESIS-VARIANT PROBLEM SUCCINLTY}

\section{Tractable Symmetry Breaking for ILP}
\label{sec:tractableBreaking}
Due to the hardness of the hypothesis variant problem, we now describe a sound yet incomplete approach to remove hypothesis variants from an ILP hypothesis space. 
For simplicity, we describe the approach for a single-rule hypothesis before generalising to arbitrary hypotheses.
% \ac{@AC, make it clearer what the goal is}

Informally, our approach exploits variable  ordering in a rule and forces the following condition on all rules in the hypothesis space.
If a body literal $l$ does not contain a variable $x$, but contains variables both larger and smaller than $x$ according to the variable ordering, then $x$ must occur in a literal lexicographically smaller than $l$. 
In other words, any skipped occurrence of a variable, with respect to $\ordV $, must be justified by lexicographically smaller literals. 

To illustrate the idea, reconsider this Zendo rule from the introduction:

% \begin{center}
\begin{tabular}{l}
\hspace{-1em}\emph{$r_2$ = zendo(A) $\leftarrow$ piece(A,C), size(C,B), blue(C), small(B)}
\end{tabular}
% \end{center} 

\noindent
We order variables alphabetically, i.e. $A \ordV B \ordV C$. 
The variable $B$ is not in \emph{piece(A,C)} so we need to check if a lexicographically smaller variable tuple than $(A,C)$ is in the rule. 
To compare tuples we first order the variables in the tuple and add a prefix containing occurrences of the smallest variable. 
We assume that rules do not contain singleton variables. 
Thus, the argument of a unary literal must appear elsewhere in the rule, i.e. how the literal is ordered is completely dependent on another literal.
For instance, the tuple of \emph{size(C,B)} is $(B,C)$ and the tuple of \emph{piece(A,C)} is $(A,C)$. Observe that $(A,C)<_{lex} (B,C)$ and there are no tuples smaller than $(A,C)$ derivable from the rule. Thus, $r_2$ is pruned because it does not contain a tuple smaller than $(A,C)$ containing the skipped variable $B$. Observe that a similar analysis shows that $r_1$ from the introduction is not pruned. 
% \begin{center}
% \begin{tabular}{l}
% size(B,C), blue(B)}
% \end{tabular}
% \end{center} 

% \noindent
To formalise our approach, we make several assumptions explicit:

\begin{itemize}
\item[(i)] A rule contains only variables, no constants. For example, the rule \emph{zendo(A)\ $\leftarrow$ piece(A,B)}  obeys this assumption, but \emph{zendo(A)\ $\leftarrow$ piece(A,lbt)} does not, where \emph{lbt} is a constant.
\item[(ii)] The variables in the head of a rule are always the smallest variables according to the variable ordering\footnote{The total order $\ordV$ on variable symbols. %See Section~\ref{sec:ProblemSetting}.
%\ac{@DC, weird comment as we are in S3}
}. 
Formally, for any  rule $r$, $v_1\in head\_vars(r)$, $v_2\in body\_vars(r)$, $v_1\ordV v_2$. 
For example, the rule \emph{zendo(A)\ $\leftarrow$ piece(A,B)} obeys this assumption as $A\ordV B$ while the rule \emph{zendo(B)\ $\leftarrow$ piece(A,B)} does not. 

\item[(iii)] A variable is in a rule if and only if every smaller variable is in the rule. 
Formally, for any  rule $r$,  $v_1,v_2\in vars(r)$, if there exists $v_3\in \mathcal{V}$ such that $v_1\ordV v_3 \ordV v_2$, then $v_3\in vars(r)$. 
For example, the rule \emph{zendo(A)\ $\leftarrow$ piece(A,B)} obeys this assumption, while the rule \emph{zendo(A)\ $\leftarrow$ piece(A,C)} does not as $A\ordV B \ordV C$. 
\end{itemize}

\noindent
Any rule that violates assumptions (ii) and/or (iii)     can be transformed by a variable renaming into the appropriate form, either by shifting variables or swapping head and body variables. Furthermore, we consider the argument tuples of literals reordered with respect to $\ordV$:

\begin{definition}[\textbf{Ordered variables}]
\label{def:orderd_vars}
Let $l$ be a literal with arguments $args(l)=(x_1,\dots,x_n)$.
Let $(i_1, \ldots, i_n)$ be a permutation of $(1, \ldots, n)$ such that $x_{i_1} \le_V x_{i_2} \le_V \cdots \le_V x_{i_n}$.
Then the ordered variables of $l$ are $ord\_vars(l) = (x_{i_1}, \ldots, x_{i_n})$.

% $1\leq i_1,\cdots,i_n\leq n$ such that for all $1\leq j< n$, $x_{i_j}\ordnsV x_{i_{j+1}}$ and $i_j\not = i_{j+1}$. Then $ord\_vars(l)= (x_{i_1},\dots,x_{i_n})$. 
\end{definition}
\begin{example}
    Let $A \ordV B \ordV D$, and $l=p(D,A,B)$ be a literal. Then $ord\_vars(l) = (A,B,D)$.
\end{example}
\noindent
Additionally, we add a prefix to the tuples resulting from Definition~\ref{def:orderd_vars} to produce tuples of uniform length:
\begin{definition}[\textbf{Prefix padding}]
\label{def:prefix_padding}
% \ac{for def 10 on pre\_pad, I’d put the “max … times” below, not above, reads a bit confusingly when above}
Let $k\geq 0$ and $l$ be a literal such that $ord\_vars(l)= (x_{1},\dots,x_{n})$. 
Then $$\mathit{pre\_pad}_k(l) =(\hspace{-1,1em}\underbrace{x_{\star},\dots,x_{\star},}_{\max\{0,k-n\}\mbox{ times}}\hspace{-1,1em}x_1,\dots,x_n) $$
where $x_{\star}$ is the minimal element with respect to  $\ordV$.  \end{definition}
\noindent For the rest of this section we assume the following:
\begin{assumption}
% \ac{change name//}
  Let $p$ be a predicate symbol in the $\mathit{BK}$ such that for all other predicate symbols $q$ in the $\textit{BK}$, $arity(p)\geq arity(q)$. Then $k\geq arity(p)$.
\end{assumption}
\noindent
We define a lexicographical literal order to order literals by first rearranging their arguments with respect to $\ordV$ and then adding a prefix to the resulting tuples to produce tuples of uniform size:

\begin{definition}[\textbf{Lexicographical literal order}]
\label{def:arg_ordering}
Let  $l_1$ and $l_2$ be literals with arity $\geq 2$. Then we say that $l_1<_{lex}^k l_2$ if
$ \mathit{pre\_pad}_k(l_1) <_{lex} \mathit{pre\_pad}_k(l_2)$
where $<_{lex}$ is the lexicographical order on $k$-tuples.
\end{definition}
\begin{example}
    Let $l_1=p(D,A,B)$ and $l_2=q(C,B)$ be literals where variables are alphabetically ordered and $k=3$. Then $l_2<_{lex}^3 l_1$ because  $ord\_vars(l_1) = (A,B,D)$, $ord\_vars(l_2) = (B,C)$, $\mathit{pre\_pad}_3(l_1)= (A,B,D)$, $\mathit{pre\_pad}_3(l_2)= (x_*,B,C)$, and $(x_*,B,C) <_{lex} (A,B,D)$ where $x_*= A$.
\end{example}

\noindent 
We use this order to identify literals with \emph{skipped} variables:

\begin{definition}[\textbf{Skipped}]
\label{def:var_skipped}
Let $l$ be a literal such that $\mathit{pre\_pad}_k(l) = (x_{1},\dots,x_{n})$. Then $skipped_k(l) =\{ y\mid  x_{1}\ordV y\ordV x_{n} \wedge y\not\in vars(l)\}$.
\end{definition}
\begin{example}
\label{ex:skipped}
    Consider the rule $h(A,B)\leftarrow p(A,E),$ $p(B,C),p(C,D)$
where variables are ordered alphabetically. Observe that $skipped_2(p(A,E)) = \{B,C,D\}$ and $skipped_2(p(C,D)) =\emptyset$.
\end{example}
% \noindent We identify skipped variables in a literal as \emph{gaps} when they occur in a rule:
% \begin{definition}[\textbf{Gaps}]
% \label{def:var_gaps}
% Let $r$ be a  rule, and $v\in \mathit{body\_var}(r)$. Then $gaps_k(v,r) = \{l\mid l\in body_{\geq 2}(r)\wedge  v\in skipped_k(l)\}$.
% \end{definition}
% \dc{TODO: remove gaps and update the proof to use witnessed.}
% \begin{example}
%     Consider the rule from Example~\ref{ex:skipped}. Observe that  $gaps(C,r_1)=gaps(D,r_1)= \{p(A,E)\}$
% \end{example}

\noindent 
Rules can contain literals with skipped variables but the skipped variables must be witnessed by a literal lower in the lexicographical literal order: 
\begin{definition}[\textbf{Witnessed}]
\label{def:witnessed}
Let $r$ be a  rule, 
$v$ be a variable,  
$l_1\in body_{\geq 2}(r)$ such that  $v\in skipped_k(l_1)$, 
and $l_2\in body_{\geq 2}(r)$ such that $v\in vars(l_2)$ and $l_2<_{lex}^k l_1$.  
Then we say that $l_1$ is $v$-witnessed in $r$.
\end{definition}

\noindent 
A variable is \emph{unsafe} if it is skipped in a literal and there is no witnessing literal:
% \noindent When there is a literal that skips a variable, with respect to the given rule, and there does not exists a witnessing literal, we refer to that variable as \emph{unsafe}: 
% \ac{
% A variable is \emph{unsafe} in a rule when it is in a literal that is not witnessed:
% }
\begin{definition}[\textbf{Safe variable}]
\label{def:unsafe_safe_var}
Let $r$ be a rule and $v\in body\_vars(r)$ such that for all $l\in body_{\geq 2}(r)$, where  $v\in skipped_k(l)$, $l$ is $v$-witnessed in $r$. Then $v$ is \emph{safe}. Otherwise, $v$ is \emph{unsafe}.
\end{definition}
% \noindent When all body variables of a rule are \emph{safe} we refer to the rule as \emph{proper}:
% \begin{definition}[\textbf{Proper rule}]
% \label{def:proper} 
% A rule is \emph{proper} if all its variables are safe. \dc{replace proper with rule containing only safe variables.}
% \end{definition}
\noindent We illustrate the concept of safe variables: 
\begin{example}
    Consider the rules:\\
    \begin{tabular}{l}
\emph{$r_2$ = h(A,B) $\leftarrow$ p(A,C),p(B,E),p(C,D)}\\
\emph{$r_3$ = h(A,B) $\leftarrow$ p(A,C),p(B,D),p(C,E)}
\end{tabular}
% where variables are ordered alphabetically. 

\noindent
Observe that $p(B,E)$ skips $D$ and none of the literals of $r_2$ witness $p(B,E)$. The only literal containing $D$ is $p(C,D)$ and $p(B,E)<_{lex}^2 p(C,D)$. Thus, $D$ is \emph{unsafe} in $r_2$. In rule $r_3$,  $p(C,E)$ skips $D$ and $p(C,E)$ is witnessed by $p(B,D)$ because $p(B,D)<_{lex}^2 p(C,E)$. Thus, $D$ is safe in $r_3$. Observe that in $r_3$ all variables are safe including $C$ and $E$, i.e. $p(A,C)<_{lex}^2 p(B,D)$.  
\end{example}

% \ac{@AC, rephrase}

% \noindent
% We now show that every optimal hypothesis has a proper hypothesis-variant.

\noindent
We now show that every rule has a body-variant containing only safe variables:

\begin{proposition}[\textbf{Soundness}]
\label{lem:sound}
For every rule $r$ there exists a rule $r'$ such that $r'$ is a body-variant of $r$ and all variables in $r'$ are safe.
\end{proposition}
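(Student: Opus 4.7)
The plan is to exhibit $r'$ as a canonical representative: define $\phi$ as a lexicographic ordering on the (finite) set $S$ of body-variants of $r$, pick a minimizer $r^* \in S$, and argue that $r^*$ itself has all body variables safe. Concretely, for each $r'' \in S$, let $\phi(r'')$ be the sequence obtained by sorting the $\mathit{pre\_pad}_k$ tuples of the literals in $body_{\geq 2}(r'')$ under $<_{lex}^k$, and then comparing the resulting sorted sequences componentwise lexicographically. Finiteness of $S$ guarantees the existence of a minimizer $r^*$.

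I would prove the key claim---every $v \in body\_vars(r^*)$ is safe---by contradiction. Suppose $v$ is unsafe: by Definitions~\ref{def:unsafe_safe_var} and~\ref{def:witnessed}, there is some $l \in body_{\geq 2}(r^*)$ with $v \in skipped_k(l)$ such that no $l' \in body_{\geq 2}(r^*)$ containing $v$ satisfies $l' <_{lex}^k l$. Since $v$ is skipped in $l$, there exists $w \in vars(l)$ with $v \ordV w$; I take $w$ to be the $\ordV$-maximum of $vars(l)$. By assumption~(ii) and the fact that $v$ is a body variable, $w$ must also be a body variable (otherwise $w$ would be a head variable and hence satisfy $w \ordV v$). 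Let $r^{**}$ be obtained from $r^*$ by transposing the names $v$ and $w$ throughout; $r^{**}$ is again a body-variant of $r$.

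The heart of the argument is to show $\phi(r^{**}) <_{lex} \phi(r^*)$, contradicting the minimality of $r^*$. I would establish two sub-claims. \textbf{(a)} The literal $l$ transforms into a literal $l^{**}$ with $l^{**} <_{lex}^k l$, because $w$, the $\ordV$-maximum of $vars(l)$, is replaced by the strictly smaller variable $v \notin vars(l)$, so inserting $v$ into the sorted variable tuple strictly decreases the tuple at the position where $v$ lands. \textbf{(b)} Every literal $l_0 <_{lex}^k l$ in $r^*$ transforms into some $l_0^{**} \leq_{lex}^k l_0$ in $r^{**}$: by the no-witness hypothesis no such $l_0$ contains $v$, so $l_0$ is either unchanged (when $w \notin vars(l_0)$) or else has $w$ replaced by $v$, which by the same sorted-tuple argument strictly decreases it. Letting $L_<$ be the set of literals of $r^*$ that are $<_{lex}^k l$, sub-claims~(a) and~(b) together produce at least $|L_<|+1$ literals in $r^{**}$ strictly below $l$; a standard pointwise-domination argument on the sorted multisets then yields $\phi(r^{**}) <_{lex} \phi(r^*)$.

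The main obstacle I anticipate is the careful bookkeeping around $\mathit{pre\_pad}_k$ padding with the sentinel $x_\star$ and around ties in $<_{lex}^k$, particularly when literals of arity smaller than $k$ coexist with those of arity $k$. These subtleties, however, reduce to the one-line observation that replacing a single entry of a sorted tuple by a strictly smaller value absent from that tuple yields a strictly lex-smaller tuple of the same length, and that $x_\star$-padding is unaffected by a transposition of two variables distinct from $x_\star$.
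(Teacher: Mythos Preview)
Your approach is correct and takes a genuinely different route from the paper's proof. The paper proceeds constructively: it selects the $\ordV$-smallest unsafe variable $x_i$, finds the $<_{lex}^k$-minimal literal $l_1$ in which $x_i$ is skipped, and applies a tailored substitution $\sigma_i$ that sends the smallest variable of $l_1$ above $x_i$ to $x_i$ and then shifts the remaining variables down to restore contiguity of the variable set; two auxiliary lemmas (Lemmas~\ref{prop:sub1} and~\ref{prop:sub2}) certify that each of these two substitution pieces preserves the order $<_{lex}^k$ on body literals, and induction on the index of the smallest unsafe variable terminates the process. By contrast, you argue non-constructively via a canonical representative: take a $\phi$-minimiser and show that a single transposition $v\leftrightarrow w$ would strictly decrease $\phi$ whenever an unsafe variable remains. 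Your route dispenses with the shift substitution and the two lemmas, replacing them with a one-shot multiset-domination argument on the sorted padded tuples; the paper's route, in turn, yields an explicit normalising procedure and makes transparent exactly which variable becomes safe at each step (mirroring Example~\ref{ex:soundnessvariants}).

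One small imprecision to tighten: as written, the set $S$ of \emph{all} body-variants of $r$ is infinite, since the definition of body-variant allows $body\_vars(r')$ to be any size-$|body\_vars(r)|$ subset of the countably infinite $\mathcal{V}$; hence a $\phi$-minimiser need not exist. The fix is immediate---restrict $S$ to body-variants over the same variable set as $r$ (this set is finite and closed under your transposition, which is all the contradiction step needs). Also, your parenthetical that ``$w$ must also be a body variable'' by assumption~(ii) presumes $v$ is not a head variable, which is not guaranteed; but this is harmless, since in either case $v,w\in body\_vars(r^*)$ and the transposition is a bijective renaming of $body\_vars(r^*)$, so $r^{**}$ is still a body-variant of $r$.
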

\begin{proof}[Proof (sketch)]To simplify our argument we assume that $vars(r) = \{x_1,\cdots, x_m\}$ where for $1\leq j< m$, $x_j\ordV x_{j+1}$. We prove the proposition by induction, selecting the smallest unsafe variable $x$ (according to $\ordV$) and then constructing a substitution that, when applied to $r$, results in a rule where the smallest unsafe variable is larger than $x$. After finitely many steps the smallest unsafe variable is larger than $x_m$, i.e. every variable in the constructed rule is safe.
\end{proof}

\noindent See the appendix for a full proof. Below we outline the transformation used by the induction step of the full proof. 
\begin{example}
\label{ex:soundnessvariants}
Let $$r_8= h(A,B)\leftarrow p(A,E),p(B,\mathbf{C}),p(C,D).$$ where the variables are ordered alphabetically.
The argument outlined in the proof of  Proposition~\ref{lem:sound} applies to rule $r_8$ as follows.
Observe that $C$ is the smallest unsafe variable in $r_8$. Applying  $\sigma_1 = \{E\mapsto C, C\mapsto F\}\{F\mapsto E\}$ to $r_8$ we get the rule:
$$r_9= h(A,B)\leftarrow p(A,C),p(B,E),p(E,\mathbf{D}).$$
Now $D$ is the smallest unsafe variable in $r_9$. Applying
$\sigma_2 = \{E\mapsto D, D\mapsto F\}\{F\mapsto E\}$ to $r_9$ we get the rule:
$$r_{10}= h(A,B)\leftarrow p(A,C),p(B,D),p(D,E).$$
Observe that rule $r_{10}$ does not have unsafe variables. Thus applying the substitution $\sigma =\sigma_1\sigma_2= \{E\mapsto C, C\mapsto D, D\mapsto E\}$ to $r$ results in a body-variant where all variables are safe. 
\end{example}

\noindent
We now show our main result:
\begin{theorem}
Let 
$(E, B, \mathcal{H})$ be an ILP input, \emph{cost$_{E,B}$} be a cost function,
and $h \in \mathcal{H}$ be an optimal hypothesis with respect to \emph{cost$_{E,B}$}.
Then there exists $h' \in \mathcal{H}$ such that $h'$ is a variant of $h$ and all rules of $h'$ only contain safe variables.
\end{theorem}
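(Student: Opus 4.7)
The plan is to lift Proposition~\ref{lem:sound} from individual rules to the full hypothesis. Write $h = \{r_1, \ldots, r_n\}$ and observe that both the body-variant relation and the notion of a safe variable are defined rule-by-rule, so the natural strategy is to transform each $r_i$ independently and then collect the results.

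For each $i$, Proposition~\ref{lem:sound} yields a rule $r_i'$ which is a body-variant of $r_i$ and in which every variable is safe. Set $h' = \{r_1', \ldots, r_n'\}$ and exhibit the bijection $f \colon h' \to h$ given by $f(r_i') = r_i$. Since the body-variant renamings act only on variables local to a single rule, $f$ is well defined and each $r_i'$ is, by construction, a body-variant of $f(r_i')$. Hence $h'$ satisfies the definition of hypothesis-variant, and every rule of $h'$ contains only safe variables.

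What remains is to certify $h' \in \mathcal{H}$. This follows from the standard closure property of ILP hypothesis spaces: a space is specified by a declarative language bias (predicate signatures, head declarations, bounds on literal count, assumptions (i)--(iii) of Section~\ref{sec:tractableBreaking}, and so on), all of which are invariant under bijective renaming of body-only variables. Because each $r_i'$ is obtained from $r_i$ by precisely such a renaming, $h \in \mathcal{H}$ forces $h' \in \mathcal{H}$.

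The main obstacle I foresee is not the transformation itself, which is delivered wholesale by Proposition~\ref{lem:sound}, but pinning down the closure-under-renaming assumption on $\mathcal{H}$; once that is granted, the theorem is a one-step lifting. Note that the cost function plays no role in the argument, since $h'$ is only required to be a safe-variable variant of $h$, not an optimal hypothesis, so no cost-preservation step is needed.
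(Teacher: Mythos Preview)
Your lifting of Proposition~\ref{lem:sound} rule-by-rule is exactly the argument the paper intends; indeed, the paper gives no separate proof of the theorem, treating it as an immediate corollary of Proposition~\ref{lem:sound} together with the tacit closure of $\mathcal{H}$ under body-variable renaming that you correctly isolate. Your remark that optimality of $h$ is never used is also accurate.

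One small wrinkle worth a sentence of care: if two distinct rules $r_i,r_j\in h$ are already body-variants of one another, the safe variants produced by Proposition~\ref{lem:sound} may coincide, so that $h'$ as a \emph{set} has strictly fewer elements than $h$ and your map $f(r_i')=r_i$ is not well-defined, let alone bijective. This is harmless in practice (index the rules, as \popper{} does via rule IDs, or observe that such an $h$ already contains a redundant rule), but the edge case should be acknowledged rather than hidden in the phrase ``$f$ is well defined.''
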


% \ac{by considering bijections between rules with the same size and predicate symbols\footnote{This additional step is  performed by our ASP implementation.}. }

\noindent
See the appendix for examples of rules that are body-variants of each other and only contain safe variables.
% and examples of unsound extensions of the order we considered. We leave further improvements to future work. 
\section{Implementation}
\label{sec:Implmenation}
% Although our symmetry-breaking idea is general and system agnostic, 
We demonstrate our idea in ASP with the ILP system \popper{} \cite{popper,combo}.
% in which the symmetry breaking can be implemented via ASP constraints.

\subsection{Popper}
Our symmetry-breaking approach directly works with all variants of \popper{hopper,maxsynth,propper}.
For simplicity, we describe the basic version of \popper{}.

\popper{} uses a generate, test, combine, and constrain loop to find an optimal hypothesis (Definition \ref{def:opthyp}) (the full algorithm is in the appendix).
% \ref{alg:popper} shows the \popper{} algorithm.
\popper{} starts with an answer set program $\mathcal{P}$.
This program can be viewed as a \emph{generator} program because each model (answer set) represents a hypothesis.
The program $\mathcal{P}$ uses head (\emph{hlit}/3) and body (\emph{blit/3}) literals to represent a hypothesis.
The first argument of each literal is the rule ID, the second is the predicate symbol, and the third is the literal variables, where \emph{0} represents \emph{A}, \emph{1} represents \emph{B}, etc.
% \begin{center}
% \begin{tabular}{l}
% \end{tabular}
% \end{center} 
% \noindent
For instance, 
\popper{} represents the rule \emph{f(A,B) $\leftarrow$ tail(A,C), head(C,B)} as a set with three atoms:
\begin{center}
\begin{verbatim}
{hlit(0,f,(0,1)), blit(0,tail,(0,2)), 
 blit(0,head,(2,1))}
\end{verbatim}
\end{center}
% \begin{center}
% \begin{tabular}{l}
% \emph{\{hlit(0,last,(0,1)), blit(0,tail,(0,2)), blit(0,head,(2,1))\}}
% \end{tabular}
% \end{center} 

\noindent
The program $\mathcal{P}$ contains choice rules for head and body literals:
\begin{verbatim}
{hlit(Rule,Pred,Vars)}:- 
  rule(Rule), vars(Vars,Arity),  hpred(Pred,Arity).
{blit(Rule,Pred,Vars)}:- 
  rule(Rule), vars(Vars,Arity), bpred(Pred,Arity).
\end{verbatim}

\noindent
The literal \emph{rule(Rule)} denotes rule indices.
The literals \emph{hpred(Pred,Arity)} and \emph{bpred(Pred,Arity)} denote predicate symbols and arities that may appear in the head or body of a rule, respectively.
The literal \emph{vars(Vars,Arity)} denotes all possible variable tuples.

In the \emph{generate stage}, \popper{} uses an ASP system to find a model of $\mathcal{P}$ for a fixed hypothesis size, enforced via a cardinality constraint on the number of head and body literals.
If no model is found, \popper{} increments the hypothesis size and loops again.
If a model exists, \popper{} converts it to a hypothesis $h$ (a definite program).

In the \emph{test stage}, \popper{} uses Prolog to test $h$ on the training examples and background knowledge.
% If $h$ is a solution, \popper{} returns it.
If $h$ entails at least one positive example and no negative examples, \popper{} saves $h$ as a \emph{promising program}.

In the \emph{combine stage}, \popper{} searches for a combination (a union) of promising programs that entails all the positive examples and has minimal size.
\popper{} formulates the search as a combinatorial optimisation problem \cite{combo}, implemented in ASP as an optimisation problem.
If a combination exists, \popper{} saves it as the best hypothesis and updates the maximum hypothesis size.
% \popper{} does not save a program as promising if it is recursive or has predicate invention (line 10). The reason is that a combination of recursive programs or programs with invented predicates can entail more examples than the union of the examples entailed by each program. However, \popper{} can learn hypotheses with recursion or predicate invention as they can be output by the generate stage (line 6) and evaluated (line 9).

In the \emph{constrain stage}, \popper{} uses $h$ to build constraints, which it adds to $\mathcal{P}$ to prune models and thus prune the hypothesis space.
For instance, if $h$ does not entail any positive example, \popper{} adds a constraint to prune its specialisations as they are guaranteed not to entail any positive example.
For instance, the following constraint prunes all specialisations (supersets) of the rule \emph{f(A,B) $\leftarrow$ tail(A,C), head(C,B)}:

\begin{center}
\begin{verbatim}
:- hlit(R,f,(0,1)), blit(R,tail,(0,2)), 
   blit(R,head,(2,1)).
\end{verbatim}
\end{center} 

\noindent
\popper{} repeats this loop using multi-shot solving \cite{multishot-clingo} and terminates when it exhausts the models of $\mathcal{P}$ or exceeds a user-defined timeout.
It then returns the best hypothesis found.

% \begin{algorithm}[t!]
% \small
% {
% \begin{myalgorithm}[]
% def $\text{popper}$(bk, E+, E-, max_size):
%   cons = {}
%   promising = {}
%   best_hypothesis = {}
%   while True:
%     h = generate(cons, max_size)
%     if h == UNSAT:
%       return best_hypothesis
%     tp, fp = test(E+, E-, bk, h)
%     if tp > 0 and fp == 0:
%       promising += h
%       combine_outcome = combine(promising, max_size)
%       if combine_outcome != NO_SOLUTION:
%         best_hypothesis = combine_outcome
%         max_size = size(best_hypothesis)-1
%     cons += constrain(h, tp, fp)
%   return best_hypothesis
% \end{myalgorithm}
% \caption{
% \popper{}
% }
% \label{alg:popper}
% }
% \end{algorithm}

\subsection{Symmetry Breaking Encoding}
\label{subsec:SymBreakEncode}
We now describe our ASP encoding $\mathcal{E}$ to break symmetries. 
We add $\mathcal{E}$ to the answer set program $\mathcal{P}$ used by \popper{} to generate programs to prune unsafe rules and thus unsafe hypotheses.

% Let $\mathcal{P}$ be a set of predicate symbols that may appear in a rule, $k$ be the maximum number of variables allowed in a rule, and $m$ be the maximum arity of any predicate symbol in $P$.
The fact \tw{var\_member(Vars,V)} denotes that variable \tw{V} is a member of  variable tuple \tw{Vars}, e.g. \tw{var\_member((0,4,3),3)}.
For every variable tuple \tw{xs}, we sort the tuple to \tw{ys} and add the fact \tw{ordered\_vars(xs,ys)} to $\mathcal{E}$.
For instance, for the variable tuple \tw{(4,1,3)}, we add the fact
\tw{ordered\_vars((4,1,3),(1,3,4))}.
These facts match \emph{ordered variables} (Definition 
 \ref{def:orderd_vars}).
We add facts to encode the lexicographic order over variable tuples (Definition 
\ref{def:arg_ordering}).
For instance, we add the facts
\tw{lower((0,0,1),(0,0,2))} and \tw{lower((4,7,1),(4,8,2))}.
We add all skipped facts of the form \tw{skipped(Vars,V)} to denote that the variable \tw{V} is strictly between two variables \tw{A} and \tw{B} in an ordered variable tuple \tw{Vars}, where \tw{V} is not in \tw{Vars} (Definition \ref{def:var_skipped}); these include,
for instance, the facts \tw{skipped((0,1,3),2)}, \tw{skipped((1,3,5),2)}, and \tw{skipped((1,3,5),4)}.
We add a rule to identify the ordered variable tuple of a selected body literal:
\begin{verbatim}
appears(Rule,OrderedVars):- 
  blit(Rule,_,Vars), padded_vars(Vars,PaddedVars),
  ordered_vars(PaddedVars,OrderedVars).
\end{verbatim}

% \noindent
% We add a rule to identify gaps (Definition \ref{def:var_gaps} in variable tuples:
% \begin{verbatim}
% gap(Vars,V):-
%     ordered_vars(_, Vars),
%     var_member(A,Vars),
%     var_member(B,Vars),
%     var(V),
%     not var_member(V,Vars),
%     A < V < B.
% \end{verbatim}
% For instance, we add the facts \tw{gap((4,6,7),5)} and \tw{gap((1,2,3,5),4)}.
% \ac{@ALL, this gap rule can be ground as a preprocessing step and could just say we add the facts. Do you think it would clearer?}

\noindent
% \ac{@AC MAKE IT CLEAR THAT THE HEAD VARIABLES ARE ALWAYS SAFE}
The literal \tw{padded\_vars(Vars,PaddedVars}) performs prefix padding on variable tuples (Definition \ref{def:prefix_padding}).
For instance, assuming that the maximum arity of any literal is 4, for the variable tuple \tw{(4,1)} we added the fact \tw{padded\_vars((4,1),(0,0,4,1))}.
We add a rule to identify witnessed variables (Definition \ref{def:witnessed}):
\begin{verbatim}
witnessed(Rule,V,Vars1):-    
   appears(Rule,Vars1), skipped(Vars1,V),
   lower(Vars2,Vars1), var_member(V,Vars2),
   appears(Rule,Vars2). 
\end{verbatim}
\noindent
Finally, we add a constraint to prune rules with unsafe variables \ref{def:unsafe_safe_var}):
\begin{verbatim}
:- body_var(Rule,V), appears(Rule,Vars), 
   skipped(Vars,V), not witnessed(Rule,V,Vars).
\end{verbatim}

\noindent
Our symmetry-breaking encoding adds at most $O(m \cdot n^2 \cdot k)$ ground rules to the ASP solver, where $m$ is the number of candidate rules in a hypothesis, $n$ is the number of possible variable tuples, and $k$ is the number of possible variables.
% In practice, n < 50, making the additional computational cost small.

% \noindent
% \ac{ORIGINAL}
% \begin{verbatim}
% safe_var(Rule,Vars1,V):-    
%     lower(Vars1,Vars2),
%     var_member(V,Vars2),
%     appears(Rule,Vars2).
% :- appears(Rule,Vars), 
%    gap(Vars,V), 
%    not head_var(Rule,V), 
%    not safe_var(Rule,Vars,V).
% \end{verbatim}
% \ac{DC NEW:}
% % \dc{\textbf{Alt Idea:} **********************}
% \begin{verbatim}
% unsafe_var(V,Rule):-
%     body_var(V,Rule), 
%     gaps(V,Rule,Vars),
%     not witnessed(V,Rule,Vars).
% \end{verbatim}
% \begin{verbatim}
% witnessed(V,Rule,Vars1):-    
%     gaps(V,Rule,Vars1)
%     lower(Vars2,Vars1),
%     var_member(V,Vars2),
%     appears(Vars2,Rule).
% \end{verbatim}
% \begin{verbatim}
% gaps(V,Rule,Vars):- 
%    appears(Vars,Rule), 
%    skipped(Vars,V).
% \end{verbatim}
% \begin{verbatim}
% :- unsafe_var(V,Rule).
% \end{verbatim}

% \ac{AC NEW:}
% % \dc{\textbf{Alt Idea:} **********************}
% \begin{verbatim}

% \end{verbatim}
% \begin{verbatim}
% witnessed(V,Rule,Vars1):-    
%     gaps(V,Rule,Vars1)
%     lower(Vars2,Vars1),
%     var_member(V,Vars2),
%     appears(Vars2,Rule).
% gaps(V,Rule,Vars):- 
%    appears(Vars,Rule), 
%    skipped(Vars,V).
% :-
%     body_var(V,Rule), 
%     gaps(V,Rule,Vars),
%     not witnessed(V,Rule,Vars).
% \end{verbatim}

% \ac{start}
% \begin{verbatim}
% witnessed(V,Rule,Vars1):-    
%     appears(Vars1,Rule), 
%     skipped(Vars1,V).
%     lower(Vars2,Vars1),
%     var_member(V,Vars2),
%     appears(Vars2,Rule).
% \end{verbatim}
% \ac{end}
% \noindent
% Finally, we add a rule to prune \emph{unsafe rules} (Definition \ref{def:unsafe_rule}):
% \begin{verbatim}
% :- appears(Rule,Vars), 
%    gap(Vars,V), 
%    not head_var(Rule,V), 
%    not safe_var(Rule,Vars,V), 
% \end{verbatim}

% \noindent
% We add the encoding to the answer set program $\mathcal{P}$ used by \popper{} to generate programs to prune unsafe rules and thus unsafe hypotheses.

\section{Experiments}
To test our claim that pruning variants can reduce solving time, our experiments aim to answer the question:

\begin{description}
\item[Q1] Can symmetry breaking reduce solving time?
\end{description}

\noindent
To answer \textbf{Q1}, we compare the solving time of \popper{} with and without symmetry breaking.

To test our claim that pruning variants allows us to scale to harder tasks, our experiments aim to answer the question:

\begin{description}
\item[Q2] Can symmetry breaking reduce solving time when progressively increasing the complexity of tasks?
\end{description}

\noindent
To answer \textbf{Q2}, we compare the solving time of \popper{} with and without symmetry breaking when varying the complexity of an ILP task.

Finally, the goal of breaking symmetries is to reduce solving time and, in turn, reduce learning time.
Therefore, our experiments aim to answer the question:

\begin{description}
\item[Q3] Can symmetry breaking reduce learning time?
\end{description}

\noindent
To answer \textbf{Q3}, we compare the learning time of \popper{} with and without symmetry breaking.

\subsubsection{Experimental Setup}
To answer \textbf{Q1}, we compare the solving time (time spent generating hypotheses) of \popper{} with and without symmetry breaking.
% We measure solving time as the time 
We use a solving timeout of 20 minutes per task.
% We round times over one second to the nearest second.
% We use \popper{} with at most \ac{?} variables of rules of size at most \ac{?}.
% We also measure the balanced accuracy of the learned hypotheses, but this dependent variable is not 
To answer \textbf{Q2}, we compare the solving time of \popper{} with and without symmetry breaking on progressively harder ILP tasks by increasing the number of variables allowed in a rule\footnote{
\citet{popper} show that the hypothesis space grows exponentially in the number of variables allowed in a rule.}.
We use a solving timeout of 60 minutes per task.
% We enforce a solving timeout of 60 minutes per task.
For \textbf{Q1} and \textbf{Q2}, the independent variable is whether \popper{} uses symmetry breaking and the dependent variable is the solving time, which depends entirely on the independent variable.
To answer \textbf{Q3}, we compare the learning time of \popper{} with and without symmetry breaking.
We use a learning timeout of 60 minutes per task.
However, learning times are a function of many things, not only solving time.
For instance, symmetry breaking could reduce solving time by 50\% allowing \popper{} to generate twice as many hypotheses but it also needs to test them.
Therefore, learning time does not directly measure the impact of symmetry breaking.
In other words, the dependent variable (learning time) does not entirely depend on the independent variable.

% We measure mean learning time.
In all experiments, the runtimes include the time spent generating the symmetry-breaking rules.
We round times over one second to the nearest second.
We repeat all experiments 10 times.
We plot and report 95\% confidence intervals (CI).
We compute  95\% CI via bootstrapping when data is non-normal.
% We use a paired t-test or a Wilcoxon Signed-Rank Test (depending on whether the differences are normally distributed) to determine the statistical significance of any differences in the results, and any subsequent reference to a significance test refers to one of these two tests.
% We report results with 95\% confidence intervals.
To determine statistical significance, we apply either a paired t-test or the Wilcoxon signed-rank test, depending on whether the differences are normally distributed. 
We use the Benjamini–Hochberg procedure to correct for multiple comparisons.
We use \popper{} 4.4.0.
We use an AWS m6a.16xlarge instance to run experiments where each learning task uses a single core.

% \paragraph{Reproducibility.}
% The code and experimental data for reproducing the experiments are available as supplementary material and will be made publicly available if the paper is accepted for publication.    

\subsubsection{Domains}
We use 449 learning tasks from several domains:
% The appendix provides additional information about our domains and tasks.

\textbf{1D-ARC.} This dataset \cite{onedarc} contains visual reasoning tasks inspired by the abstract reasoning corpus \cite{arc}.

% \textbf{Alzheimer.} These real-world tasks 
% \cite{DBLP:journals/ngc/KingSS95} involve learning rules describing four properties desirable for drug design against Alzheimer’s disease.

\textbf{IGGP.} The task is to induce rules from game traces \cite{iggp} from the general game playing competition \cite{ggp}.

\textbf{IMDB.}
A real-world dataset which contains relations about movies \cite{imdb}. 

\textbf{List functions.} The goal of each task in this data \cite{ruleefficient} is to identify a function that maps input lists to output lists, where list elements are natural numbers. 

% \textbf{Satellite.}
% The task is to learn diagnostic rules for battery faults in the power subsystem of a satellite, which consists of 40 components and 29 sensors \cite{satellite}.

\textbf{Trains.}
The goal is to find a hypothesis that distinguishes east and west trains \cite{michalski:trains}.

\textbf{Zendo.}
An inductive game where players discover secret rules by building structures \cite{discopopper}.

% \subsubsection{Balanced accuracy}
% Each task contains training and testing examples.
% We use the training examples to train \popper{}, i.e. to learn a hypothesis.
% We test a hypothesis on the testing examples.
% Given a hypothesis $h$, background knowledge $B$, and a set of examples $T$, a \emph{true positive} is a positive example in $T$ entailed by $h \cup B$ and a \emph{true negative} is a negative example in $T$ not entailed by $h \cup B$.
% We denote the number of true positives and true negatives as $tp_{T}(h)$ and $tn_{T}(h)$ respectively. 
% We measure the \emph{balanced predictive accuracy} of a hypothesis as:
% % This measure handles imbalanced data by evaluating the average performance across both positive and negative classes:
% \begin{align*}
% ba(h) = \frac{1}{2}\left( \frac{tp_{T}(h)}{tp_{T}(h)+fn_{T}(h)}+\frac{tn_{T}(h)}{tn_{T}(h)+fp_{T}(h)} \right)
% \end{align*}
% % Balanced accuracy is defined as the average of the recall (or true positive rate) for each class.
% % Balanced accuracy is equivalent to standard accuracy when the hypothesis performs equally well on both classes or when the data is balanced.

% % We assume a set $T$ of unseen test data in the following.

\subsection{Experimental Results}

\subsubsection{Q1. Can Symmetry Breaking Reduce Solving Time?}

Figure \ref{fig:q1_popper} shows the solving times of \popper{} with and without symmetry breaking\footnote{
Detailed per-task results and corresponding improvements are in the appendix.}.
Significance tests confirm $(p < 0.05)$ that symmetry breaking reduces solving times on 97/449 (22\%) tasks and increases solving times on 1/449 (0\%) tasks.
There is no significant difference in the other tasks.
The mean decrease in solving time is $178 \pm 56$ seconds.
The median decrease is 20 seconds with 95\% CI between 8 and 51 seconds.
The mean and median increase is $41$ seconds.
% The median increase is 20 seconds where the 95\% CI is between 8 and 51 seconds.
Some improvements are substantial.
For instance, for the \emph{sokoban-terminal} task, symmetry breaking reduces the solving time from $1075 \pm 112$ seconds to $59 \pm 6$ seconds.
These are minimum improvements because \popper{} without symmetry breaking often times out after 20 minutes.
With a longer timeout, we would likely see greater improvements.
Overall, the results show that symmetry breaking can drastically reduce solving time.

\begin{figure}[ht!]
\centering
\includegraphics[scale=1.3]{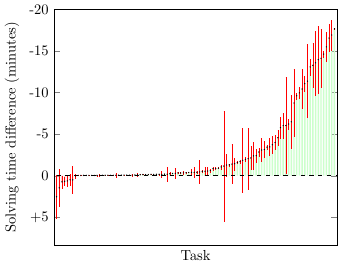}
\caption{Solving time difference (minutes) with symmetry breaking.
The tasks are ordered by improvement.
}
\label{fig:q1_popper}
\end{figure}
\subsubsection{Q2. Can Symmetry Breaking Reduce Solving Time When Progressively Increasing the Complexity of Tasks?}

Figure \ref{fig:q2} shows the solving times of \popper{} with and without symmetry breaking on progressively harder ILP tasks.
The results show that \popper{} without symmetry breaking struggles to scale to harder tasks.
Without symmetry breaking, the mean solving times when allowed to use 7, 8, or 9 variables in a rule are $125 \pm 44$, $1411 \pm 310$, and $3600 \pm 0$ seconds respectively.
By contrast, with symmetry breaking, the mean solving times when allowed to use 7, 8, or 9 variables in a rule are $4 \pm 0$, $7 \pm 0$, and $17 \pm 2$ seconds respectively.
In other words, for the hardest task, symmetry breaking reduces solving time from over an hour to only 17 seconds, a 99.5\% reduction.
Overall, the results show that symmetry breaking can reduce solving time when progressively increasing the complexity of tasks.

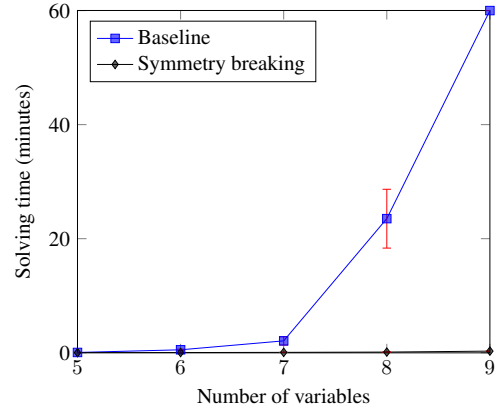
\begin{figure}[ht!]
\centering
\begin{tikzpicture}[scale=.8]
\begin{axis}
[   
    xmin=5, 
    xmax=9, 
    ymin=-1,
    ymax=3601,
    % ymode=log,
    xtick = {5, 6, 7, 8, 9},
    ytick={0,1200,2400,3600},
    yticklabels={0,20,40,60},  
    legend pos=north west,
    legend cell align=left, 
    % ymode=log,
    ylabel={Solving time (minutes)},
    xlabel={Number of variables},
    xlabel style={font=\scriptsize},
    ylabel style={font=\scriptsize},
    xlabel near ticks, ylabel near ticks
]
\addplot [mark=square*,blue, fill opacity=0.6, error bars/.cd, y dir=both, y explicit, error bar style={color=red}] table [x=n, y=mean, y error minus=minus, y error plus=plus] {data/q2-times-popper.txt};
\addplot [mark=diamond*,black, fill opacity=0.6, error bars/.cd, y dir=both, y explicit, error bar style={color=red}] table [x=n, y=mean, y error minus=minus, y error plus=plus] {data/q2-times-sbc.txt};

\addlegendentry{Baseline}
\addlegendentry{Symmetry breaking}
\end{axis}
\end{tikzpicture}  
\caption{Solving times (minutes) of \popper{} with and without (baseline) symmetry breaking on one \emph{trains} task.
We vary the number of variables allowed in a rule and thus the size of the hypothesis space.
% The errors bars denote 95\% confidence intervals.
}
\label{fig:q2}
\end{figure}
\subsubsection{Q3. Can Symmetry Breaking Reduce Learning Time?}

Figure \ref{fig:q3_popper} shows the learning times of \popper{} with and without symmetry breaking.
Significance tests confirm $(p < 0.05)$ that symmetry breaking reduces learning times on 128/449 (29\%) tasks and increases learning times on 12/449 (3\%) tasks.
There is no significant difference in the other tasks.
The mean decrease in learning time is $385 \pm 109$ seconds and the median is 54 seconds with 95\% CI between 21 and 134 seconds.
The mean increase is $192 \pm 121$ seconds and the median is 120 seconds with 95\% CI between 54 and 266 seconds.
These are minimum improvements because \popper{} without symmetry breaking often times out after 60 minutes.
With a longer timeout, we would likely see greater improvements.
Overall, the results show that symmetry breaking can drastically reduce learning time.

\begin{figure}[ht!]
\centering
\includegraphics[scale=1.3]{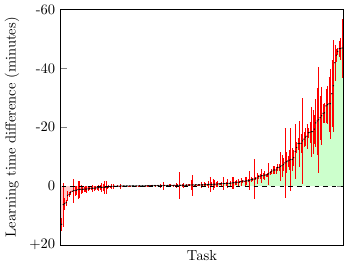}
\caption{Learning time difference (minutes) with symmetry breaking.
The tasks are ordered by improvement.
}
\label{fig:q3_popper}
\end{figure}

% \ac{@DC, I commented out your text because of compile errors}
% \ac{@AC, I fixed the important part. It is just an examples illustrating why predicate ordering does not mix with the gap constraint. }

% \dc{Below is an example of a pair of programs which are considered distinct by the gap constraint, but are variants of each other:
% \begin{align*}
%     f(0,1,2)\mbox{:-}& in(0,1,5), in(0,6,2), succ(1,\mathbf{4}),\\ & succ(\mathbf{3},6), empty(0,3), empty(0,4).\\\\
%     f(0,1,2)\mbox{:-}& in(0,1,5), in(0,6,2), succ(1,\mathbf{3}),\\ & succ(\mathbf{4},6), empty(0,3), empty(0,4).
% \end{align*}
% }
% \begin{verbatim}
% Gap constraint cannot prune: 

% body_literal(0,in,3,(0,1,5))
% body_literal(0,in,3,(0,6,2))
% body_literal(0,my_succ,2,(1,4))  <---
% body_literal(0,my_succ,2,(3,6))  <---
% body_literal(0,empty,2,(0,3)) 
% body_literal(0,empty,2,(0,4)) 
% Answer:
% 2
% body_literal(0,in,3,(0,1,5)) 
% body_literal(0,in,3,(0,6,2)) 
% body_literal(0,my_succ,2,(1,3)) <---
% body_literal(0,my_succ,2,(4,6)) <---
% body_literal(0,empty,2,(0,3)) 
% body_literal(0,empty,2,(0,4)) 
% \end{verbatim}
\section{Conclusions and Limitations}
We introduced a symmetry-breaking method for ILP.
We showed that determining whether two rules are body-variants (Definition \ref{def:body_variant_problem}) is graph isomorphism hard (Proposition \ref{prop:bodyVarHard}).
We described a symmetry-breaking approach to prune body-variant rules and implemented it in ASP.
We proved that this approach is sound and only prunes body-variant rules (Proposition \ref{lem:sound}).
We have experimentally shown on multiple domains, including visual reasoning and game playing, that our approach can drastically reduce solving time and learning time, sometimes from over an hour to only 17 seconds.
% \ac{One thing we could emphasize a bit more regarding the results is that the approach gives significant performance improvements especially on hard tasks, thereby improving the scalability of Popper}
\subsubsection{Limitations}
Our symmetry-breaking approach is incomplete, as complete symmetry-breaking is graph isomorphism hard (Proposition \ref{prop:bodyVarHard}).
% Nonetheless, there may still be practical ways to break more symmetries.
Finding other efficient symmetry-breaking techniques is future work.
\section*{Acknowledgments}
We thank Andreas Niskanen for early discussions about this
work. Andrew Cropper was supported by his EPSRC fellowship (EP/V040340/1). David M. Cerna was supported by the
Czech Science Foundation Grant 22-06414L and Cost Action
CA20111 EuroProofNet. Matti Jarvisalo was supported by Research Council of Finland (grant 356046).
\bibliography{kr-sample}
\newpage
\newpage
\section{Appendix}

\noindent 
In this section, we define the notation used in this paper, provide a short introduction to ILP, and define the \emph{body-variant problem} and discuss its complexity.
\paragraph{Preliminaries.}
Let $\mathcal{P}$ be a countably infinite set of predicate symbols totally well-ordered by $<_{\mathcal{P}}$, and $\mathcal{V}$ be a countably infinite set of variables totally well-ordered by $<_{\mathcal{V}}$.
%, and $\mathcal{C}$ be a countably infinite set of constants totally well-ordered by $<_{\mathcal{C}}$. 
A \emph{term} is either a constant or a variable. Every $p\in \mathcal{P}$ has an arity $0\leq a$ denoted $arity(p)$.
An \emph{atom} is of the form $p(t_1,\dots,t_a)$ where $p\in\mathcal{P}$, $t_1,\dots,t_a$ are terms. Given an atom $l= p(t_1,\dots,t_a)$, $sym(l) = p$, $arity(l)=a$, $args(l)=(t_1,\dots,t_a)$ where $(\dots)$ is an  ordered tuple.  We refer to an atom as \emph{ground} if all its arguments are constants and \emph{constant-free} if all its arguments are variables.  A literal is an atom or a negated atom. 
A rule $r$ is of the form $h\leftarrow p_1,\ldots, p_n$ where $h, p_1,\ldots, 
p_n$ are literals, $head(r) = h$, and $body(r) = \{ p_1,\ldots, p_n\}$. We will only consider rules that are \emph{head-connected}, i.e., all variables occurring in the head of a rule appear in a literal of the body of the rule. We refer to a rule as \emph{constant-free} if all of its literals are constant-free. Given a rule $r$, we define $max\_arity(r) = \max \{arity(l) \mid l \in body(r)\}$
By $vars(l)$ we denote the set of variables in a literal $l$. The variables of a rule $r$, denoted $vars(r)$, is defined as $head\_vars(r) \cup body\_vars(r)$ where $head\_vars(r)= vars(head(r))$ and $body\_vars(r)= vars(body(r))$.

A {\it substitution} is a function $\sigma$ from variables to terms such that $\sigma(x)\neq  x$ for only finitely many variables. The set of the variables that are not mapped to themselves is called the \emph{domain} of $\sigma$, denoted as $dom(\sigma)$. The \emph{range} of $\sigma$, denoted $ran(\sigma)$, is the set of terms $\{\sigma(x) \mid x\in dom(\sigma) \}$. A substitution $\sigma$ is a \emph{renaming} if $ran(\sigma)\subset \mathcal{V}$. The set of renamings is denoted by $\mathcal{V}_{rm}$. Substitutions are extended to rules and literals in the usual manner. We use the postfix notation for substitution application to terms and write $t\sigma$ instead of $\sigma(t)$.

\subsection{Body-variant Problem Hardness}
In~\cite{ArvindDKT15}, the authors provide a reduction from graph isomorphism to hypergraph isomorphism. Using this reduction we can extend the following reduction from binary to n-ary predicates.
\newcommand\propcnt{\value{proposition}}
\setcounter{proposition}{0}
\begin{proposition}[\textbf{Body-variant hardness}]
The body-variant problem is GI-hard.
\end{proposition}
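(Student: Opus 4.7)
The plan is to establish a polynomial-time reduction from graph isomorphism (GI) to the body-variant problem. Given two (directed) graphs $G_1=(N_1,E_1)$ and $G_2=(N_2,E_2)$ with $|N_1|=|N_2|=n$ (if they differ in size we immediately output two non-variant rules), I will construct in polynomial time two rules $r_{G_1}$ and $r_{G_2}$ satisfying $head(r_{G_1})=head(r_{G_2})$ such that $G_1\cong G_2$ if and only if $r_{G_1}$ and $r_{G_2}$ are body-variants.

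For the encoding, first I would introduce a fresh distinct variable $x_i$ for every node $n_i\in N_i$, together with a fixed nullary (or ground) head atom $h$ that is identical in both rules, ensuring trivially that the rules are head-connected and share a head. The body of $r_{G_i}$ consists of one literal $edge(x_a,x_b)$ for every arc $(n_a,n_b)\in E_i$, together with a unary ``vertex tag'' literal $vertex(x_a)$ for each $n_a\in N_i$; the vertex tags are needed to ensure that isolated nodes of the graph still induce a variable in $body\_vars(r_{G_i})$, so the bijection demanded by the body-variant definition ranges over all of $N_i$ rather than only the endpoints of edges. The construction is clearly polynomial in $|N_i|+|E_i|$.

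Next I would verify both directions of the correctness claim. For the forward direction, an isomorphism $\varphi:N_1\to N_2$ lifts directly to a bijective renaming $\sigma$ on $body\_vars(r_{G_1})=\{x_a\mid n_a\in N_1\}$ by setting $\sigma(x_a)=x_{\varphi(n_a)}$; since $\varphi$ preserves edges and is a bijection on nodes, applying $\sigma$ maps each $edge(x_a,x_b)$ to $edge(x_{\varphi(a)},x_{\varphi(b)})$ and each $vertex(x_a)$ to $vertex(x_{\varphi(a)})$, so $r_{G_1}\sigma=r_{G_2}$ as multisets of body literals, hence as sets. For the backward direction, a witnessing renaming $\sigma$ for body-variance is a bijection between the variable sets; define $\varphi(n_a)$ to be the node of $G_2$ corresponding to $\sigma(x_a)$. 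The presence of the $vertex$ tag for every node forces $\varphi$ to be a well-defined bijection on $N_1\to N_2$, and the condition $r_{G_1}\sigma=r_{G_2}$ ensures that $(n_a,n_b)\in E_1$ iff $(\varphi(n_a),\varphi(n_b))\in E_2$, i.e.\ $\varphi$ is an isomorphism.

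The main obstacle I anticipate is the backward direction: one must rule out the possibility that $\sigma$ identifies two distinct body literals of $r_{G_1}$ to a single literal of $r_{G_2}$ (because $body(\cdot)$ is a set, not a multiset). This is why the encoding puts a separate $vertex(x_a)$ tag for each node — the bijectivity of $\sigma$ on $body\_vars$ then forces each edge literal to land on a distinct edge literal. The footnote already remarks that the reduction lifts from binary to $n$-ary predicates via the \citet{ArvindDKT15} reduction of GI to hypergraph isomorphism, so once the binary case is handled no further work is needed. The resulting chain GI $\leq_p$ body-variant is exactly the definition of GI-hardness.
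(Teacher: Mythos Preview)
Your reduction is correct and is essentially the paper's own argument: encode each graph as a rule with a nullary head $h$ and one binary \emph{edge} literal per arc, so that a bijective body renaming corresponds exactly to a graph isomorphism. Your addition of unary \emph{vertex} tags (so that isolated nodes still contribute to $body\_vars$) and your explicit treatment of both directions make the argument tighter than the paper's terser version, which omits the vertex tags and only spells out the body\mbox{-}variant $\Rightarrow$ isomorphism direction; note, however, that the non-collapse of edge literals you worry about already follows from injectivity of $\sigma$ on variables, so the real role of the tags is the isolated-vertex issue rather than the set-versus-multiset one.
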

\begin{proof}
Let $(N_1,E_1)$ and $(N_2,E_2)$ be finite undirected graphs such that $|N_1|=|N_2|$. We construct rules $r_1$ and $r_2$ as follows:
\begin{itemize}
\item $head(r_1)=head(r_2) = h$ where $h$ has arity 0.
\item $W_1,W_2\subset \mathcal{V}$, sets of variables, such that $W_1\cap W_2=\emptyset$, $|W_1|=|N_1|$, and $|W_2|=|N_2|$
\item $f_1: N_1\rightarrow W_1$ and $f_2: N_2\rightarrow W_2$ are bijections.
\item $body(r_1) = \{edge(f_1(n_1),f_1(n_2))\mid (n_1,n_2)\in E_1\}$ 
\item $body(r_2) = \{edge(f_2(n_1),f_2(n_2))\mid (n_1,n_2)\in E_2\}$ 
\end{itemize}
Observe, $edge/2$ is a predicate symbol denoting an edge in a graph. Now, if there exists bijective renamings $\sigma_1:W_1\rightarrow W_2$ and $\sigma_2:W_2\rightarrow W_1$ such that $r_1\sigma_1=r_2$ and $r_2\sigma_2=r_1$, Then we can deduce that
\begin{itemize}
\item $(N_1,E_1)=  (N_1,\{ (f_1^{-1}(x\sigma_2),f_1^{-1}(y\sigma_2)) \mid edge(x,y)\in body(r_2)\})$
\item $(N_2,E_2)=  (N_2,\{ (f_2^{-1}(x\sigma_1),f_2^{-1}(y\sigma_1)) \mid edge(x,y)\in body(r_1)\})$
\end{itemize}
and thus the graphs $(N_1,E_1)$ and $(N_2,E_2)$ are isomorphic.    
\end{proof}

\subsection{Soundness of safe rule Symmetry Breaking} Below we provide the full proof of soundness. We first define two transformations to transform arbitrary rules into rules containing only safe variables and show that these transformation do not change the ordering of literals with respect to the lexicographical literal order.

\noindent Our first result allows us to swap a variable in a literal with a larger variable without changing the order of the effected literals:

\begin{lemma}
\label{prop:sub1}
Let  $l_1$ and $l_2$ be literals with arity $\geq 2$, $x_1\in vars(l_1)$, $x_2\in vars(l_2)$, $x_2\not \in vars(l_1)$, $x_2\ordV x_1 \ordV y$, and $\theta = \{x_1\mapsto x_2,x_2\mapsto y\}$. Then  if $l_1 <_{lex}^k l_2$, then $ l_1\theta <_{lex}^k l_2\theta$.
\end{lemma}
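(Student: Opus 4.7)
The plan is to track the padded ordered tuples $T_i := pre\_pad_k(l_i)$ and $T_i' := pre\_pad_k(l_i\theta)$ at the first index where $T_1$ and $T_2$ disagree. Let $j$ be the least index with $T_1[j] \neq T_2[j]$, and write $a := T_1[j]$, $b := T_2[j]$, and $c_i := T_1[i] = T_2[i]$ for $i < j$, so $a \ordV b$. I will show that $T_1'[i] = T_2'[i] = c_i$ for all $i < j$, that $T_1'[j] = a$, and that $a \ordV T_2'[j]$; together these yield $l_1\theta <_{lex}^k l_2\theta$.

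The first step is to bound $b \ordnsV x_2$. Because $x_2 \notin vars(l_1)$ and $x_{\star} \ordV x_2$ (since $x_{\star}$ is globally minimal while $x_2$ is a body variable, strictly above every head variable by Assumption (ii)), $T_1$ contains no occurrence of $x_2$, so neither does the common prefix $(c_1, \ldots, c_{j-1})$. Since $T_2$ is sorted and $x_2 \in vars(l_2)$, the variable $x_2$ must occur at some index $\geq j$ in $T_2$, forcing $T_2[j] \ordnsV x_2$. Hence $a \ordV b \ordnsV x_2 \ordV x_1$, and $\theta$ therefore fixes each $c_i$ together with $a$ (and $b$ whenever $b \ordV x_2$).

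The second step is to transfer this structure across $\theta$. The only values that $\theta$ introduces into $vars(l_i\theta)$ are $x_2$ (from each $x_1$) and $y$ (from each $x_2$), both strictly $\ordV$-greater than every $c_i$ and $a$. Consequently no newly introduced element can displace $c_1, \ldots, c_{j-1}$ or $a$ from the $j$ smallest entries of the multisets $vars(l_i\theta)$, giving $T_1'[i] = T_2'[i] = c_i$ for $i < j$ and $T_1'[j] = a$. For $T_2'[j]$ I split on $b$: if $b \ordV x_2$, then $b$ is preserved by $\theta$ and remains the $j$-th smallest entry, so $T_2'[j] = b$; if $b = x_2$, then either $x_1 \in vars(l_2)$ and the $x_1$'s of $l_2$ contribute new $x_2$'s giving $T_2'[j] = x_2$, or $x_1 \notin vars(l_2)$ and $T_2'[j]$ is some variable strictly $\ordV$-greater than $x_2$. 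In every subcase $a \ordV T_2'[j]$, which closes the argument.

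The main obstacle I anticipate is the first step, the bound $b \ordnsV x_2$. A naive componentwise-monotonicity argument fails when $x_1 \in vars(l_2)$, because the simultaneous replacements $x_1 \mapsto x_2$ and $x_2 \mapsto y$ can lower $ord\_vars(l_2)$ lexicographically --- for instance, $l_2 = p(x_2, x_1, x_1)$ gives $ord\_vars(l_2\theta) = (x_2, x_2, y) <_{lex} (x_2, x_1, x_1)$. The position-$j$ argument circumvents this by using $x_2 \notin vars(l_1)$ to pin the first disagreement strictly below the $x_2$-region of $T_1$ and $T_2$, where $\theta$ acts as the identity.
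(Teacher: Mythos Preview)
Your argument is correct and takes a genuinely different route from the paper. The paper's proof is a two-line transitivity chain: it asserts $\mathit{pre\_pad}_k(l_1\theta) <_{lex} \mathit{pre\_pad}_k(l_1)$ and $\mathit{pre\_pad}_k(l_2) <_{lex} \mathit{pre\_pad}_k(l_2\theta)$ and then concatenates with the hypothesis. You instead track the first index $j$ where $T_1$ and $T_2$ disagree, use $x_2\notin vars(l_1)$ to bound $T_2[j]\ordnsV x_2$, and argue that $\theta$ acts as the identity on everything at or below position $j$ on the $l_1$ side. The payoff is precisely what your closing paragraph flags: when $x_1\in vars(l_2)$, the paper's second inequality can fail---your example $l_2=p(x_2,x_1,x_1)$ gives $T_2'=(x_2,x_2,y)<_{lex}(x_2,x_1,x_1)=T_2$---so the transitivity chain is broken and a position-$j$ analysis like yours is genuinely required. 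The paper's route buys brevity at the cost of leaving this case unaddressed; your route buys a complete argument.

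One further remark: your appeal to Assumption~(ii) to secure $x_\star\ordV x_2$ is not cosmetic but essential. If $x_2=x_\star$, padding injects $x_2$ into $T_1$, your bound $b\ordnsV x_2$ collapses, and in fact the lemma itself fails: with $k=4$, $l_1=p(x_1,w)$ for some $w$ with $y\ordV w$, and $l_2=q(x_2,x_1,x_1,x_1)$, one gets $T_1=(x_2,x_2,x_1,w)<_{lex}(x_2,x_1,x_1,x_1)=T_2$ yet $T_1'=(x_2,x_2,x_2,w)>_{lex}(x_2,x_2,x_2,y)=T_2'$. So your importing of the ambient assumption is exactly right and worth stating explicitly.
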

\begin{proof}
Observe that $\mathit{pre\_pad}_k(l_1\theta)<_{lex}\mathit{pre\_pad}_k(l_1) $ and $\mathit{pre\_pad}_k(l_2)<_{lex}\mathit{pre\_pad}_k(l_2\theta)$, thus it follows that if $l_1 <_{lex}^k l_2$, then $ l_1\theta <_{lex}^k l_2\theta$.
\end{proof}
\begin{example}
Consider the following rules:
    $$r_4: h(A,B)\leftarrow p(A,E),p(B,C),p(C,D)$$
         $$r_5: h(A,B)\leftarrow p(A,C),p(B,E),p(E,D)$$
where  $r_5= r_4\sigma$ and $\sigma =\{E\mapsto C, C\mapsto E\}$. Observe that $p(A,E) <_{lex}^2 p(B,C)$ and $p(A,E)\sigma <_{lex}^2 p(B,C)\sigma$.
\end{example}

% \ac{need some text here, \ac}

\noindent 
Our second result allows us to shift variables downwards in a literal to remove gaps in the body variables of the given rule:
\begin{lemma}
\label{prop:sub2}
Let $r$ be a  rule such that $body\_vars(r)= \{x_1,\cdots,x_{i-1},x_{i+1},\dots,x_m\}$ where for $1\leq j< m$,  $x_j\ordV x_{j+1}$, and $\sigma = \{x_{j+1} \mapsto x_{j} \mid i\leq j< m\}$.  Then  for all $l_1,l_2\in body_{\geq 2}(r)$, if $l_1 <_{lex}^k l_2$, then $ l_1\sigma <_{lex}^k l_2\sigma$.
\end{lemma}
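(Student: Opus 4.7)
The plan is to show that the substitution $\sigma$ acts as a strictly order-preserving injection on $vars(r)$, and that the lex order on prefix-padded tuples is preserved under any such map applied componentwise.

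First I would verify that $\sigma$ is injective on $vars(r)$. By hypothesis, $body\_vars(r) = \{x_1,\ldots,x_{i-1}\} \cup \{x_{i+1},\ldots,x_m\}$, so $x_i$ is absent. Under $\sigma$, each $x_j$ with $j < i$ is fixed, while $x_{j+1} \mapsto x_j$ for $i \leq j < m$. The gap at $x_i$ is precisely what prevents collisions between the fixed low variables and the shifted high variables, so $\sigma$ is injective on $vars(r)$. The same case distinction shows $\sigma$ strictly preserves $<_{\mathcal{V}}$ on $vars(r)$: if $y_1 <_{\mathcal{V}} y_2$ both lie below the gap, $\sigma$ fixes them; if both lie above, $\sigma$ shifts them by the same amount; if $y_1$ is below and $y_2$ is above, then $y_1\sigma = y_1 \leq_{\mathcal{V}} x_{i-1} <_{\mathcal{V}} x_i \leq_{\mathcal{V}} y_2\sigma$.

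Next I would lift this to $pre\_pad_k$. Since $\sigma$ is order-preserving on $vars(l)$, applying it componentwise to the ordered tuple $ord\_vars(l)$ yields the ordered tuple $ord\_vars(l\sigma)$; injectivity guarantees that the two tuples have the same length, so the prefix padding on both sides uses the same number of copies of the minimum element $x_\star$. Because $x_\star$ is the global minimum of $\mathcal{V}$ and $\sigma$ only acts on $x_{i+1},\ldots,x_m$, we have $x_\star <_{\mathcal{V}} x_{i+1}$, so $x_\star$ is fixed by $\sigma$. Consequently, $pre\_pad_k(l\sigma)$ is obtained from $pre\_pad_k(l)$ by applying $\sigma$ componentwise.

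Finally, given $l_1, l_2 \in body_{\geq 2}(r)$ with $l_1 <_{lex}^k l_2$, let $j$ be the first index where $pre\_pad_k(l_1) = (a_1,\ldots,a_k)$ and $pre\_pad_k(l_2) = (b_1,\ldots,b_k)$ differ, so $a_j <_{\mathcal{V}} b_j$ and $a_t = b_t$ for $t < j$. Componentwise equality below $j$ is trivially preserved by $\sigma$, and strict inequality at $j$ is preserved by order-preservation, yielding $a_j\sigma <_{\mathcal{V}} b_j\sigma$. Hence $pre\_pad_k(l_1\sigma) <_{lex} pre\_pad_k(l_2\sigma)$, i.e.\ $l_1\sigma <_{lex}^k l_2\sigma$.

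The main obstacle is the bookkeeping around the interaction between $ord\_vars$ and $pre\_pad_k$: one must ensure that injectivity makes the padding lengths match, that $x_\star$ is untouched by $\sigma$, and that the case analysis for order-preservation covers the boundary between the fixed and shifted regions. Once these are settled, the lex argument is immediate from the standard fact that any componentwise strictly order-preserving map on tuples preserves lex order.
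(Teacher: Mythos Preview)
Your proposal is correct and follows essentially the same approach as the paper. The paper's proof is a single sentence asserting the key fact that $\mathit{pre\_pad}_k(l\sigma)$ is obtained from $\mathit{pre\_pad}_k(l)$ by applying $\sigma$ componentwise (``elements do not swap places after substitution''); you supply the details behind this assertion---injectivity and strict order-preservation of $\sigma$, stability of $x_\star$, and the standard lex-order argument---that the paper leaves implicit.
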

\begin{proof}
Follows from the fact that for all literals $l\in body_{\geq 2}(r)$, if  $\mathit{pre\_pad}_k(l)=(y_1,\cdots, y_k)$, then $\mathit{pre\_pad}_k(l\sigma)=(y_1\sigma,\cdots, y_k\sigma)$, i.e. elements do not swap places after substitution. 
\end{proof}
\begin{example}
Consider the following rules:
    $$r_6: h(A,B)\leftarrow p(A,C),p(B,F),p(F,D)$$
         $$r_7: h(A,B)\leftarrow p(A,C),p(B,E),p(E,D)$$
where  $r_7= r_6\sigma$ and $\sigma =\{F\mapsto E\}$. Observe that $p(B,F) <_{lex}^2 p(F,D)$ and $p(B,F)\sigma <_{lex}^2 p(F,D)\sigma$.
\end{example}
Using these two transformations we show that every rule has a  body-variant containing only safe variables: 
\setcounter{proposition}{1}
\begin{proposition}[\textbf{Soundness}]
%\label{lem:sound}
For every rule $r$ there exists a rule $r'$ such that $r'$ is a body-variant of $r$ and all variables in $r'$ are safe.
\end{proposition}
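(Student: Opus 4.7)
The plan is to prove the proposition by well-founded induction on the position of the smallest unsafe body variable of $r$ with respect to $\ordV$. Enumerate $body\_vars(r) = \{x_1,\dots,x_m\}$ with $x_j \ordV x_{j+1}$, and let $x_i$ be the smallest unsafe variable. The induction hypothesis gives, for any rule whose smallest unsafe variable is strictly greater than $x_i$, a body-variant in which every variable is safe; the induction goal is therefore to construct from $r$ a body-variant $\tilde{r}$ whose smallest unsafe variable is strictly greater than $x_i$. The base case, when no variable is unsafe, is immediate with $r' = r$. Termination follows from the finiteness of $body\_vars(r)$, and composing the per-step bijective renamings yields the overall body-variant.

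For the inductive step, unsafety of $x_i$ provides a literal $l \in body_{\geq 2}(r)$ with $x_i \in skipped_k(l)$ and no literal containing $x_i$ that is lex-smaller than $l$. Since $x_i$ is skipped in $l$, $ord\_vars(l)$ contains some variable $y \ordV x_i$ and some variable $z$ with $x_i \ordV z$. Following Example~\ref{ex:soundnessvariants}, I would design $\sigma$ as the composition of a swap exchanging $x_i$ with such a $\ordV$-larger variable $z \in vars(l)$ (implemented via a fresh intermediate variable so $\sigma$ is well-defined) and, if the swap leaves a gap violating assumption~(iii), the downward shift of Lemma~\ref{prop:sub2} to restore contiguity. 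The effect of $\sigma$ is that $x_i$ now occurs in $l\sigma$, which is $<_{lex}^k$-smaller than $l$ and hence serves as a witness for every literal that still skips $x_i$ in $r\sigma$; in particular, $x_i$ is safe in $r\sigma$.

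What requires careful verification is that the step-substitution $\sigma$ does not render any previously-safe variable $x_j$ with $j<i$ unsafe. This is exactly where Lemmas~\ref{prop:sub1} and \ref{prop:sub2} do the work: together they ensure that the relative $<_{lex}^k$ order between body literals is preserved by the swap and the shift. Consequently, any witness that validated the safety of some $x_j$ in $r$ remains a valid witness in $r\sigma$, so the smallest unsafe variable strictly increases at every inductive step.

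The main obstacle will be choosing the exchange partner for $x_i$ so that Lemma~\ref{prop:sub1} applies in the direction we need: the partner should occur in $l$ and satisfy the disjointness hypotheses of that lemma with respect to the witnesses we wish to preserve. A natural choice, consistent with Example~\ref{ex:soundnessvariants}, is the smallest $\ordV$-successor of $x_i$ appearing in $ord\_vars(l)$; the fact that no literal currently containing $x_i$ is lex-smaller than $l$ is what rules out conflicts with pre-existing witnesses. With this choice discharged, the inductive construction closes, and composing all step-substitutions gives a bijective renaming of body variables whose application to $r$ produces the desired body-variant $r'$ in which every variable is safe.
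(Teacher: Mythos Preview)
Your overall strategy matches the paper's: well-founded induction on the smallest unsafe variable, with each step consisting of the swap of Lemma~\ref{prop:sub1} (through a fresh variable) followed by the downward shift of Lemma~\ref{prop:sub2}, and you correctly identify the exchange partner $z$ as the smallest $\ordV$-successor of $x_i$ occurring in $l$. However, there is a genuine gap. You take $l$ to be \emph{any} literal in which $x_i$ is skipped and unwitnessed, whereas the paper takes $l_1=\min\{l\in body_{\geq 2}(r)\mid x_i\in skipped_k(l)\}$ with respect to $<_{lex}^k$. This minimality is what makes your claim ``$l\sigma$ serves as a witness for every literal that still skips $x_i$ in $r\sigma$'' go through; with a non-minimal choice the inductive step can fail to make $x_i$ safe.

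Concretely, take head $h(A)$ and body $\{p(A,D),\,p(B,E),\,p(A,B),\,p(C,D)\}$ with $A\ordV B\ordV C\ordV D\ordV E$. Here $B$ is safe (witnessed by $p(A,B)$) and $C$ is the smallest unsafe variable, unwitnessed both in $p(A,D)$ and in $p(B,E)$. If you pick the non-minimal $l=p(B,E)$ and $z=E$, your $\sigma$ is the transposition $C\leftrightarrow E$, giving body $\{p(A,D),\,p(B,C),\,p(A,B),\,p(E,D)\}$. Now $C$ is still skipped in $p(A,D)$, the only literal containing $C$ is $p(B,C)$, and $(A,D)<_{lex}(B,C)$: $C$ remains unsafe and the induction measure does not increase. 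Choosing the minimal $l_1=p(A,D)$ with $z=D$, as the paper does, avoids this.

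A second, smaller point: Lemmas~\ref{prop:sub1} and~\ref{prop:sub2} do \emph{not} show that the relative $<_{lex}^k$ order of all body literals is preserved by the swap and shift; Lemma~\ref{prop:sub1} only handles the specific pair where one literal contains $x_k$ but not $x_i$ and the other contains $x_i$. The preservation of safety for each $x_j$ with $j<i$ instead rests on the observation that $\sigma_i$ fixes every variable $\ordV$-below $x_i$: any witness $w$ for $x_j$ contains $x_j$ in the prefix of $\mathit{pre\_pad}_k(w)$ consisting of entries $\ordV$-below $x_i$, while the literal it witnesses does not, so their lex comparison is already decided in that prefix and is untouched by $\sigma_i$.
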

\begin{proof} To simplify our argument we assume that $vars(r) = \{x_1,\cdots, x_m\}$ where for $1\leq j< m$, $x_j\ordV x_{j+1}$ and $x_{m+1}$ is a variable such that $x_m \ordV x_{m+1}$. We prove the proposition by well-founded induction\footnote{Induction is performed over a finite total ordering towards a maximum element.}. Let us assume that $x_i$ is the smallest unsafe variable. Let $l_1 = \min \{ l\mid l\in body_{\geq 2}(r)\ \wedge\ x_i\in skipped_k(l)\}$, with respect to the $<_{lex}^k$, and $i< w\leq m$ such that $x_{w}\in vars(l_1)$ and for all $i<j<w$, $x_{j}\not \in vars(l_1)$. Now we build the substitution 
$$\sigma_i = \{x_{w}\mapsto x_i, x_i\mapsto x_{m+1}\}\{x_{j+1} \mapsto x_{j} \mid w\leq j\leq m\}.$$
Observe, $x_i\in vars(l_1\sigma_i)$, and for all $l_2\in body_{\geq 2}(r)$ where $x_i\in vars(l_2)$, $l_2\sigma_i\in body_{\geq 2}(r)$ such that  $x_i\in skipped_k(r\sigma_i)$. By Definition~\ref{def:unsafe_safe_var}, we know that  for all $l_2\in body_{\geq 2}(r)$ where $x_i\in vars(l_2)$, $l_1<_{lex}^k l_2$. Thus we can derive that (i) for all $l_2\in body_{\geq 2}(r)$ where $x_i\in vars(l_2)$, $l_1\sigma_i <_{lex}^k l_2\sigma_i$, using Lemma~\ref{prop:sub1}\ \&~\ref{prop:sub2} , and thus (ii) $x_i$ is safe in $r\sigma_i$. Note,  $x_m=\max \{x\mid x\in vars(r\sigma_i)\}$ and for all unsafe variables $x$ in $r\sigma_i$, $x_{i}\ordV x$.

Observe that this construction builds a bijective renaming $\sigma = \sigma_i\dots\sigma_{m-1}$ such that (i) $r\sigma$ is a body variant of $r$ and (ii) $r\sigma$ only contains safe variables, i.e. $r\sigma$ only contains safe variables.
\end{proof}

\subsection{Safe body-variants}
 Observe that rules containing only safe variables may be body-variants of each other. 
\setcounter{proposition}{\propcnt}
\begin{proposition}
 There exist a rule $r$ containing only safe variables and a body variant of $r$ that only safe variables. 
\end{proposition}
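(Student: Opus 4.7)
The plan is to prove this existence claim by exhibiting an explicit pair of distinct rules that are body-variants of each other and that both pass the safety check. Such a witness directly demonstrates the incompleteness of the tractable symmetry-breaking approach from Section~\ref{sec:tractableBreaking}.

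Concretely, I take $p$ and $q$ to be two distinct binary predicate symbols, fix the variable ordering $A \ordV B \ordV C$, and consider
\[
r = h(A) \leftarrow p(A,B), q(A,C), \qquad r' = h(A) \leftarrow p(A,C), q(A,B).
\]
With $\sigma = \{B \mapsto C,\ C \mapsto B\}$, the map $\sigma$ is a bijective renaming of $body\_vars(r) = \{A,B,C\}$ that fixes the head variable $A$ and satisfies $r\sigma = r'$, so $r'$ is a body-variant of $r$.

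To finish, I verify safety of both rules by unfolding Definitions~\ref{def:var_skipped}, \ref{def:witnessed}, and \ref{def:unsafe_safe_var} directly with $k = 2$. The relevant padded tuples are $\mathit{pre\_pad}_2(p(A,B)) = (A,B)$, $\mathit{pre\_pad}_2(q(A,C)) = (A,C)$, $\mathit{pre\_pad}_2(p(A,C)) = (A,C)$, and $\mathit{pre\_pad}_2(q(A,B)) = (A,B)$. In $r$, the only skipped occurrence is $B \in skipped_2(q(A,C))$, which is witnessed by $p(A,B)$ because $(A,B) <_{lex} (A,C)$ and $B \in vars(p(A,B))$. Symmetrically, in $r'$ the only skipped occurrence is $B \in skipped_2(p(A,C))$, witnessed by $q(A,B)$. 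The variables $A$ and $C$ are never strictly between the ordered arguments of any body literal, hence never skipped. Every body variable of $r$ and of $r'$ is therefore safe.

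The only subtle point is ensuring that $r$ and $r'$ are genuinely distinct rules rather than two presentations of the same rule. Using two different predicate symbols is what prevents the swap of $B$ and $C$ from collapsing the two bodies into identical multisets of atoms; if $p$ were equal to $q$, the two rules would coincide. This same feature explains why the safety condition is incomplete in general: whenever distinct predicate symbols admit the same argument pattern, no local ordering criterion on individual literals can force a canonical choice among the resulting body-variants.
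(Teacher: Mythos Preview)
Your proof is correct: the pair $r = h(A) \leftarrow p(A,B), q(A,C)$ and $r' = h(A) \leftarrow p(A,C), q(A,B)$ are distinct body-variants, and your verification that every body variable is safe in each is accurate.

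Your witness differs from the paper's. The paper exhibits
\[
r_{11} = h(A,B) \leftarrow p(B,D),\, p(C,E),\, p(A,C),\, p(A,D), \qquad r_{12} = r_{11}\{C\mapsto D,\, D\mapsto C\},
\]
using a \emph{single} binary predicate symbol, four body literals, and five variables. Your example is considerably smaller and the safety check is immediate, which is a genuine advantage for exposition. The trade-off is that your construction relies essentially on having two distinct predicate symbols: as you yourself note, collapsing $p$ and $q$ makes $r$ and $r'$ coincide. The paper's witness shows something slightly sharper, namely that incompleteness persists even when only one predicate symbol is available, so no refinement based on comparing predicate symbols (in the spirit of the ``monadic safe'' ordering discussed immediately after this proposition in the appendix) can rescue completeness. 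Both arguments establish the stated proposition; the paper's choice just anticipates the follow-up discussion about predicate-based orderings.
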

\begin{proof}
Consider the rules
\begin{center}
\begin{tabular}{l}
\emph{r$_{11}$  = h(A,B) $\leftarrow$ p(B,D), p(C,E), p(A,C), p(A,D)}\\
\emph{r$_{12}$  = h(A,B) $\leftarrow$ p(B,C), p(D,E), p(A,C), p(A,D)}
\end{tabular}
\end{center}
Both r$_{11}$ and r$_{12}$ are contain only safe variables and $r_{12} =r_{11}\sigma$ where  $\sigma = \{C\mapsto D, D\mapsto C\}$. 
\end{proof}
\subsection{Monadic and Safe Symmetry Breaking}

One may consider introducing further symmetry breaking by introducing ordering constraints on predicate symbols. For instance, we can refer to a rule $r$ as  \emph{monadic safe} if for all $p(x),q(y)\in body(r)$ where $x,y\not \in head\_vars(r)$, $p<_{\mathcal{P}}q$ and $x\leq_{\mathcal{V}}y$. Consider the following rule where $<_{\mathcal{P}}$ and $<_{\mathcal{V}}$ follow alphabetic ordering:

\begin{center}
\begin{tabular}{l}
\emph{$r_{13}$  =   h(A)$\leftarrow$ w(A,B), q(B), m(B,C),  s(C), p(C).
}
\end{tabular}
\end{center}
Observe that $r_{13}$ is only contains safe variables, but it is not \emph{monadic safe} as  $p<_{\mathcal{P}}q$ and $B<_{\mathcal{V}}C$. If we rename variables to make the rule monadic safe (replace $B$ with $C$ and vice versa) , then the rule could contain unsafe variables (B is unsafe in $w(A,C)$). Thus, naively adding symmetry breaking based on predicate ordering does not result in a sound symmetry breaker. While it is possible to fix the above constraint by adding additional restrictions, doing so increases the complexity of computing whether a rule is \emph{monadic safe}, and thus, reducing the efficacy of the symmetry breaking with respect to brute force enumeration.

\end{document}